\documentclass{article} 
\usepackage{iclr2024_conference,times}

\usepackage[utf8]{inputenc} 
\usepackage[T1]{fontenc}    
\usepackage{url}            
\usepackage{booktabs}       
\usepackage{amsfonts}       
\usepackage{nicefrac}       
\usepackage{microtype}      
\usepackage{titletoc}

\usepackage{subcaption}
\usepackage{graphicx}
\usepackage{amsmath}
\usepackage{multirow}
\usepackage{color}
\usepackage{colortbl}
\usepackage{algorithm}
\usepackage{algorithmicx}
\usepackage{algpseudocode}

\usepackage{relsize}

\usepackage{amssymb,amsmath,amsthm}

\newtheorem{theorem}{Theorem}

\newtheorem{corollary}{Corollary}
\newtheorem{proposition}{Proposition}

\usepackage{diagbox}

\usepackage{caption}
\usepackage{subcaption}

\def\ie{\emph{i.e.}}

\usepackage[pdfusetitle]{hyperref}
\usepackage{xcolor}
\hypersetup{
    colorlinks,
    linkcolor={red!50!black},
    citecolor={blue!50!black},
    urlcolor={blue!80!black}
}
\usepackage{url}
\usepackage{cleveref}
\usepackage{courier}
\usepackage{listings}
\lstset{basicstyle=\ttfamily\tiny,breaklines=true}
\usepackage{booktabs}
\usepackage{xcolor}

\usepackage{inconsolata}
\usepackage{url}

\usepackage{footnote}
\usepackage{footmisc}
\setlength{\textfloatsep}{10.0pt plus 2.0pt minus 4.0pt}
\setlength{\floatsep}{6.0pt plus 2.0pt minus 2.0pt}
\setlength{\intextsep}{6.0pt plus 2.0pt minus 2.0pt}
\makeatother

\makeatletter
\def\url@leostyle{%
  \@ifundefined{selectfont}{\def\UrlFont{\sf}}{\def\UrlFont{\small\ttfamily}}}
\makeatother
\urlstyle{leo}

\usepackage[shortlabels]{enumitem}

\graphicspath{{../}} 

\title{Dynamics of Adversarial Attacks on Large Language Model-Based Search Engines}

\author{Xiyang Hu\\
Arizona State University\\
\texttt{xiyanghu@asu.edu}
}

\begin{document}

\maketitle

 \begin{abstract}
 The increasing integration of Large Language Model (LLM) based search engines has transformed the landscape of information retrieval. However, these systems are vulnerable to adversarial attacks, especially ranking manipulation attacks, where attackers craft webpage content to manipulate the LLM's ranking and promote specific content, gaining an unfair advantage over competitors. In this paper, we study the dynamics of ranking manipulation attacks. We frame this problem as an Infinitely Repeated Prisoners' Dilemma, where multiple players strategically decide whether to cooperate or attack. We analyze the conditions under which cooperation can be sustained, identifying key factors such as attack costs, discount rates, attack success rates, and trigger strategies that influence player behavior. 
 We identify tipping points in the system dynamics, demonstrating that cooperation is more likely to be sustained when players are forward-looking. However, from a defense perspective, we find that simply reducing attack success probabilities can, paradoxically, incentivize attacks under certain conditions. Furthermore, defensive measures to cap the upper bound of attack success rates may prove futile in some scenarios. 
 These insights highlight the complexity of securing LLM-based systems. Our work provides a theoretical foundation and practical insights for understanding and mitigating their vulnerabilities, while emphasizing the importance of adaptive security strategies and thoughtful ecosystem design.
 \end{abstract}

\section{Introduction}
\label{sec:intro}

Large Language Models (LLMs) have revolutionized natural language processing, with widespread applications in search engines and information retrieval systems. Examples include ChatGPT search, Perplexity AI, Google Search, and Microsoft Bing, where LLMs significantly enhance the interaction of search results through their advanced language understanding and generation capabilities. 

LLM-based search engines typically operate using the Retrieval-Augmented Generation (RAG) framework \citep{fan2024survey}. In this framework, the system retrieves top-related documents from a knowledge base, database, or the web based on their relevance to the user’s query. These retrieved documents are combined with the query into an augmented input, forming a coherent prompt that provides a comprehensive context for the LLM’s response generation. This integrated prompt ensures that the LLM aligns effectively with the user’s intent as expressed in the query while incorporating relevant external information retrieved from knowledge sources.

However, the integration of LLMs into these search engines introduces new vulnerabilities, particularly ranking manipulation attacks \citep{tang2025stealthrankllmrankingmanipulation, nestaas2024adversarial}. These attacks exploit the sensitivity of LLMs to input variations by embedding crafted instructions or misleading content within documents or webpages, enabling attackers to influence LLMs into favoring their content or product over competing ones \citep{pfrommer2024ranking, aggarwal2024geo}.

Securing LLM-based search engines against ranking manipulation attacks presents a multifaceted challenge that differs significantly from traditional search engine optimization (SEO). Traditional SEO operates within a relatively transparent framework, where the ranking criteria—such as keywords, meta tags, and backlinks—are explicitly defined. Manipulative practices, such as keyword stuffing, are generally easier to detect and address \citep{sharma2019brief}. In contrast, ranking manipulation in LLM-based systems exploits the models’ advanced contextual understanding and generative capabilities. These attacks subtly tailor content to align with the LLM’s internal biases or training patterns \citep{zhao-etal-2023-prompt, wallace-etal-2019-universal, hu-etal-2024-evaluating}, steering outputs toward favoring specific perspectives or products. Unlike traditional SEO, these manipulations do not rely on overt tactics, making them more difficult to detect and counteract.

A more critical distinction lies in how these attacks influence the system. In traditional search engines, manipulative techniques primarily impact the similarity score between a user query and an individual document during the retrieval phase, only impacting the ranking of a single document. However, in LLM-based search engines, manipulated documents are incorporated into the prompt input alongside other retrieved documents. These inputs are not processed in isolation; instead, they interact within the context provided to the LLM. Consequently, manipulations in one document can influence how the LLM interprets and prioritizes other documents in the same prompt, amplifying the attack’s impact. This contextual interaction creates a cascading effect, where one manipulated document can distort the perceived relevance or importance of the entire set of retrieved documents.

Moreover, the dynamic and competitive nature of LLM-based search further complicates the development of effective security mechanisms. Multiple content providers often coexist within these systems, introducing a multiplayer dynamic where the actions of one attacker influence—and are influenced by—other actors in the system. This creates a competitive and adaptive environment, where attacks are not isolated events but interconnected strategies. As a result, designing defenses requires more than isolated algorithmic fixes; it necessitates an investigation of underlying market mechanisms and strategic interactions among providers. Effective security mechanisms must anticipate and account for these dynamics, including the continuous feedback loops that arise as providers adapt their strategies in response to one another.
As LLMs increasingly mediate access to information, understanding the nuances of such attacks is crucial for safeguarding the integrity and trustworthiness of LLM-based search engines.

In this paper, we propose a game-theoretic model to analyze the strategic interactions between attackers in LLM-based search engines. We frame the problem as an Infinitely Repeated Prisoners' Dilemma (IRPD), focusing on two competing players who must decide whether to cooperate (refrain from ranking manipulation attacks) or defect (launch ranking manipulation attacks) in each round of the game. Cooperation corresponds to abstaining from ranking manipulative practices, leading to an equitable distribution of market share, while defection represents attempts to distort rankings to gain a competitive edge. 

Our model captures the unique characteristics of adversarial interactions in LLM-based search engines. It incorporates the stochastic success probabilities of attack actions, reflecting the inherent randomness in how LLMs process and rank content. Unlike deterministic systems, LLMs operate on probabilistic principles \citep{bengio2003neural}, meaning their outputs are not deterministic. This probabilistic nature makes manipulative actions not always succeed, as demonstrated by research on prompt sensitivity and adversarial examples \citep{perez2022red, wallace-etal-2019-universal}. This uncertainty directly influences attackers' strategies and risks, which we represent through attack success rates (ASRs) to align with the unpredictable dynamics of real-world adversarial interactions. Additionally, our model also accounts for the costs of executing attacks, where crafting and deploying adversarial content require escalating investments based on the sophistication of the strategy. These trade-offs, between potential gains and resource allocation, mirror the economic constraints attackers face in real-world scenarios. Furthermore, our framework accounts for market degradation caused by simultaneous successful manipulations. When multiple attacks succeed in influencing the same LLM-mediated ranking decision, the output may contain conflicting or low-quality manipulation signals, reducing user trust and shrinking the effective market size \citep{nestaas2024adversarial}. By incorporating these unique dimensions, our model provides a rigorous framework to analyze the strategic dynamics of these adversarial behaviors.

Based on this model, we examine when cooperation can be sustained among players and how various factors influence long-term cooperation. We start with a foundational scenario involving two identical content providers engaged in infinitely repeated interactions, where each player decides whether to cooperate or defect in each round. We derive boundary conditions under which cooperation can be sustained, incorporating key factors such as attack costs, attack success rates, and the discounting of future profits. We then extend the analysis to explore alternative trigger strategies, and examine cases with heterogeneous players who possess differing characteristics. Finally, we analyze multi-player scenarios, including those where a significant number of players defect. These investigations reveal the factors and mechanisms that drive or hinder cooperation and provide actionable insights for mitigating adversarial behaviors in competitive LLM-based search environments.

Our analyses suggest that ranking manipulation attacks present unique challenges driven by the interplay of stochastic attack success rates, cost structures, future discount rates, and market degradation effects. We find that cooperation among content providers is more likely when attack costs are high or when players are sufficiently forward-looking. Interestingly, the relationship between attack success rates and cooperation sustainability is non-monotonic: intermediate attack success rates yield higher gains from attacks compared to low success rates, while also incurring lower degradation risks and costs compared to high success rates; under certain conditions, they can discourage cooperation the most by striking a balance between gain and loss. Moreover, we identify "futile defense regions," where defense measures aimed at capping attack success rates fail to meaningfully reduce defection incentives. 
This highlights the need for comprehensive defensive strategies that not only target attack success probabilities but also address broader incentive structures. Additionally, market degradation caused by widespread defection significantly reduces system utility, creating stronger incentives for cooperation in environments with heightened sensitivity to output quality.


Our contributions make both significant theoretical advancements and provide actionable practical insights. We propose a game-theoretic framework specifically tailored to analyze the adversarial dynamics in LLM-based search engines, capturing the unique characteristics of this challenge. On the theoretical side, we derive conditions for sustaining cooperation under various scenarios, offering a rigorous foundation for understanding adversarial interactions. On the practical side, we provide actionable strategies to mitigate these adversarial behaviors effectively. Moreover, we identify influential participants among heterogeneous players whose characteristics critically impact system stability, enabling the design of targeted interventions to enhance resilience. Our findings extend beyond search engines, with broad applicability to LLM-driven applications such as recommendation systems and other platforms susceptible to adversarial threats. By equipping system designers with practical insights, this work contributes to building more secure and trustworthy LLM-based ecosystems.

 \section{Related Literature}
 \label{sec:related}

Our work draws upon and connects several streams of research: (1) vulnerabilities and security challenges in LLM, (2) ranking manipulation in LLM-based search engines, (3) applications of game theory to security problems, and (4) strategic interactions in AI-driven markets. We discuss each of these areas and position our contributions within the existing literature.

\subsection{Vulnerabilities in Large Language Models}

The vulnerability of LLMs to adversarial inputs forms the technical foundation for ranking manipulation attacks in LLM-based search engines. Early work by \cite{wallace-etal-2019-universal} demonstrated universal adversarial triggers could dramatically alter language models' outputs, establishing the basis for exploiting these models' sensitivity to carefully crafted inputs. This sensitivity was further explored by \cite{zhu2023promptbench}, who developed a comprehensive framework for evaluating LLM robustness against various types of adversarial prompts. \cite{perez2022red} conducted red teaming of language models at scale, revealing systematic vulnerabilities across different attack types.

The most prevalent method of executing an attack on large language models is through prompt injection. Prompt injection attacks represent a substantial threat, involving carefully crafted malicious inputs designed to subtly but effectively manipulate the model’s output \citep{Yi2023BenchmarkingAD}. Building on this idea, \cite{Greshake2023NotWY} investigated the domain of indirect prompt manipulation, demonstrating how seemingly harmless inputs can be exploited to influence LLM behavior in unexpected ways. 

Recent research has advanced increasingly sophisticated attack techniques. \cite{jones2023automatically} and \cite{wen2024hard} demonstrated how gradient-informed optimization techniques can generate adversarial inputs that consistently bypass LLM safety measures. The transferability of these attacks across different models, as shown by \cite{zou2023universal}, highlights the systemic nature of these vulnerabilities. Additionally, \cite{wang-etal-2024-reinforcement-learning} introduced reinforcement learning approaches for generating targeted adversarial prompts, while \cite{yi-etal-2024-vulnerability} exposed how reverse alignment techniques can undermine LLM safety mechanisms. \cite{NEURIPS2024_2f148634} formulates jailbreaking as a reinforcement learning-guided search problem. 
\cite{NEURIPS2024_70702e8c} demonstrates a tree-based highly effective black-box jailbreaking method that automatically refines and prunes prompts.

\subsection{Ranking Manipulation in LLM-enhanced Search}

The integration of LLMs into search engines has introduced novel attack surfaces for ranking manipulation. \cite{nestaas2024adversarial} provided one of the first comprehensive analyses of how adversaries can use carefully crafted external content, such as website text or plugin documentation, to manipulate LLMs into promoting specific preferences or products, raising concerns about the security and reliability of LLM-driven search engines. \cite{pfrommer2024ranking} extended this work by identifying specific content structures and semantic patterns that effectively influence LLM-based ranking systems.

Further advancements into efficient attack techniques were proposed by \cite{aggarwal2024geo} and \cite{tang2025stealthrankllmrankingmanipulation}, who developed optimization-based techniques for manipulating LLM preferences in search contexts. These works demonstrate that ranking manipulation in LLM-based systems differs fundamentally from traditional SEO, requiring more sophisticated approaches that exploit the models' deep language understanding and generation capabilities.

While these studies establish the technical feasibility of ranking manipulation attacks, they primarily focus on one-time individual attack instances. Our work complements this literature by examining the long-term strategic dynamics when multiple attackers repeatedly interact within the market.

\subsection{Game Theory in Security Applications}

Our game-theoretic approach builds upon established frameworks for analyzing security challenges. \cite{manshaei2013game} provided a comprehensive foundation for applying game theory to security problems, demonstrating how game theory modeling can illuminate attacker-defender dynamics. \cite{alpcan2010network} provided foundational work on applying game theory to network security. \cite{kamhoua2021game} extended these game theory principles along with machine leanring to cyber-security systems, offering insights relevant to our analysis of LLM-based search engines.

Recent works have examined specific security scenarios using game theory. Particularly relevant is work by \cite{10.1145/3337772} on defensive deception strategies, which demonstrates how game theory can model complex security interactions where attackers and defenders must reason about each other's strategies. \cite{Roy2010ASO} further showed how game-theoretic models can capture the strategic nature of security investments and defense mechanisms.  \cite{NEURIPS2020_0ea6f098} presented a game-theoretic analysis of additive adversarial attacks and defenses, introducing novel geometry-flavored proof techniques to analyze provable attacks and defenses.

Our work extends these approaches to the novel context of LLM-based search engines, incorporating unique elements such as stochastic attack success rates, LLM degradation under universal attacks, and the interconnected nature of multiple attackers' actions.



\subsection{Strategic Interactions in AI-driven Markets}

The competitive dynamics we model align closely with broader strategic interactions observed in AI-driven markets. For instance, \cite{miklos2019collusion} explored how improved demand forecasting from AI affects collusion sustainability. While better forecasts allow firms to tailor prices more precisely, they also increase incentives to undercut rivals during high-demand periods. Similarly, \cite{calvano2020artificial} explored pricing algorithms using Q-learning and showed that such algorithms can independently develop collusive strategies, sustaining supracompetitive prices without direct communication. Together, these studies reveal how AI disrupts traditional competitive paradigms, fostering novel forms of coordination in market environments.

Another area of interest lies in cooperation among self-interested agents. \cite{bi2023understanding} analyzed partnership formation in federated learning environments, demonstrating how to prompt sustainable collaboration in repeated interactions. Complementing this, \cite{banchio2023adaptive} proposed a theoretical framework for collusion between adaptive learning algorithms, demonstrating how spontaneous coupling can lead to profitable coordination beyond static Nash equilibria.

One more relevant research direction is on humans' strategic reaction to AI adoption is crucial. \cite{wang2023algorithmic} examined the impact of algorithmic transparency on firm and user surplus in markets with strategic users, showing that while transparency can enhance a firm’s predictive power and profitability, it does not always benefit users. This highlights the complex trade-offs involved in designing transparent AI systems that balance the interests of all stakeholders. Similarly, \cite{dai2020conspicuous} focused on diagnostic decision-making by experts, identifying scenarios where high-type experts might avoid necessary testing to signal their expertise. This behavior presents barriers to the adoption of AI tools in contexts where human expertise and machine intelligence must coexist. These studies provide valuable insights into the strategic challenges faced when integrating AI into markets reliant on both expertise and technology.

In this paper, we study a novel and increasingly important AI-driven market: LLM-based search engines. This setting presents unique characteristics that differentiate it from previously studied markets. Here, content providers compete for visibility and traffic through a system mediated by large language models, where success depends not just on traditional competitive factors but also on the ability to influence LLM-based content interpretation and ranking. The stochastic nature of LLM outputs, coupled with the interdependent effects of multiple providers' actions and the potential for market-wide degradation from excessive manipulation, creates a distinctive competitive landscape that hasn't been examined in prior work. By modeling these dynamics through an infinitely repeated game framework, we provide the first theoretical analysis of strategic behavior in this emerging market structure.

\section{Model Setup}
\label{sec:model}

We model the problem of ranking manipulation attacks on LLM-based search engines as an infinitely repeated game, a common approach for studying scenarios where players interact over time and learn from each other’s actions \citep{abreu1988theory, dal2018determinants}. In each time period ($t=1, 2, 3, \cdots$), two content providers ($i=1, 2$) simultaneously decide whether to launch a ranking manipulation attack. The problem takes the structure of an Infinitely Repeated Prisoners' Dilemma, where individuals face incentives to deviate from cooperation. In our context, the short-term incentive for each player is to launch an attack to gain a temporary advantage over the competitors. 
Players discount future profits by a common discount rate $\delta \in (0, 1)$, reflecting a preference for immediate gains over long-term returns. 

The ranking manipulation attack against large language models (LLMs) is characterized by an attack success rate (ASR), denoted as $p$, which represents the probability that the attack will successfully achieve its goal—altering the model's response to elevate the rank of a targeted product. This concept is a central measure in LLM safety research, where ASR is commonly used to evaluate the effectiveness of various adversarial attacks \citep{shayegani2023survey}. The value of $p$ is influenced by several factors, including the sophistication of the attack, the strength of the model's defenses, and the resources invested by the attacker. Generally, more sophisticated attacks and greater resources lead to a higher $p$, while robust defenses and countermeasures implemented by the LLM or search engine reduce the probability of success.

When launching a preference manipulation attack, the player incurs a cost, denoted as $c$. This cost represents the resources needed to develop, deploy, and maintain the manipulative content, such as research and development effort, data acquisition, and server resources. We examine different cost function forms, including constant cost, linear cost ($c \propto p$), and quadratic cost ($c \propto p^k, k>1$). The latter one reflects the increasing difficulty and diminishing return of developing more effective attacks
. This cost structure suggests that attackers must balance the potential gains from successful attacks against the resources required to implement them. 

For the market, in each period, there is one unit mass of potential consumers, with total demand normalized to 1. The goods supplied by the providers are perfect substitutes. This assumption simplifies the analysis while capturing the competition for market share between the players. It allows us to focus on how the division of market share is influenced by the players' strategic choices. 

Each player can observe the actions of the other at the end of each period, including whether they choose to cooperate or attack. This forms a perfect monitoring game, a standard assumption in repeated game theory, which enables players to make strategic adjustments based on the observed behaviors of others \citep{fudenberg1991game}. This perfect-monitoring assumption is a baseline simplification. In deployed LLM-based search systems, content providers usually observe public signals, such as ranking positions, citations, referral traffic, and visible changes in generated answers, rather than competitors' private actions directly. We use perfect monitoring for two reasons. First, it isolates the incentive channel created by $p$, $c$, $\beta$, and $\delta$ from the separate statistical problem of inferring whether an attack occurred. Second, it gives a best-case monitoring environment for sustaining cooperation: if cooperation is difficult even when deviations are observed, noisy ranking signals would only make discipline harder. An imperfect-monitoring extension would replace observed actions with public signals over ranking outcomes and would require stronger continuation incentives.

\subsection{Payoff Structure} 

The strategic interactions between the players are represented by a payoff matrix, structured as a Prisoners' Dilemma, as summarized in Table~\ref{table:prisoners_dilemma}. Within each cell of the table, the first element denotes the payoff of player 1, and the second element denotes that of player 2. The payoff values depend on whether each player chooses to cooperate or launch an attack:

\begin{table}[h!]
\centering
\begin{tabular}{c|cc} 
\hline
\diagbox{\textbf{\textbf{Player 1}}}{\textbf{Player 2}} & \textbf{Cooperate} & \textbf{Attack}  \\ 
\hline
\textbf{Cooperate}                                      & R, R               & S, T             \\ 
\hline
\textbf{Attack}                                         & T, S               & Q, Q             \\
\hline
\end{tabular}
\caption{Payoff Matrix}
\label{table:prisoners_dilemma}
\end{table}

\begin{itemize} \item $R = \frac{1}{2}$ (\textbf{Mutual Cooperation}): If both players refrain from launching attacks, they equally share the market demand, resulting in a payoff of $\frac{1}{2}$ each. 

\item $T = p + (1-p)\frac{1}{2} - c$ (\textbf{Temptation Payoff}): When one player launches an attack while the other cooperates, the attacker potentially captures the entire market if the attack is successful with probability $p$. If the attack fails (with probability $1-p$), the market demand is split evenly, but the attacker still bears the cost $c$ of launching the attack. 

\item $S = (1-p)\frac{1}{2}$ (\textbf{Sucker Payoff}): If a player cooperates while the other attacks, the cooperator retains some market share only if the attack fails. Otherwise, the cooperator loses all market share. 

\item $Q = p^2 \frac{1}{2}\beta + p(1-p) + (1-p)^2 \frac{1}{2} - c$ (\textbf{Mutual Attack}): If both players launch attacks, the outcomes depend on the success rate of the attacks. (1) If both attacks succeed (probability $p^2$), they equally share the market but at a degraded value due to reduced LLM output quality, represented by $\beta < 1$. A smaller $\beta$ indicates a larger degradation. (2) If only one player succeeds in attacking (probability $p(1-p)$), that player monopolizes the market. (3) If both attacks fail, the market is split evenly. The condition $\beta < 1$, which represents the degraded output quality when both parties launch attacks, aligns with empirical findings showing that when all parties engage in ranking manipulation attacks, it results in detrimental outcomes for everyone involved \citep{nestaas2024adversarial}. 
\end{itemize}

The condition $T > R > Q > S$ preserves the structure of a Prisoner's Dilemma, indicating that while mutual cooperation is preferable, individual incentives drive the players to defect. This condition holds when 
$c < \frac{1}{2}p + \frac{1}{2}(\beta - 1)p^2$, ensuring that the temptation to attack outweighs the costs but results in a worse outcome for both when both attack.

\section{Analysis}
\label{sec:analysis}

In this section, we analyze the Infinitely Repeated Prisoners' Dilemma (IRPD) model to examine the conditions under which cooperation can be sustained among players. We answer key questions regarding what factors influence the sustainability of cooperation, and how varying parameters—such as attack costs, attack success rates, and future profits discount rates—affect the decision to either cooperate or defect. This analysis provides insights into the mechanisms that can encourage cooperation and mitigate ranking manipulation attacks in LLM-driven information retrieval systems.


We consider the grim trigger strategy, a classic trigger strategy in repeated games where players cooperate until one defects; after a defection, all players respond by always defecting. This strategy serves as a baseline for understanding how cooperation might be enforced in an environment where deviations are possible. It creates a strong deterrent for initial defection since any deviation leads to permanent mutual defection, which can degrade outcomes for all parties involved. In Section~\ref{sec:tft}, we extend our analysis to alternative trigger strategies.

We first derive the discounted payoffs for continuous cooperation $V(C)$ and for a one-time defection followed by mutual defection $V(D)$.
If both players always cooperate, the discounted payoff for each player is:
$$
V(C) = R + \delta R + \delta^2 R + \dots = \frac{R}{1-\delta}
$$
If the first defecting player defects once while the other cooperates, and then both players defect forever after, the payoff for the first defecting player is:
$$
V(D) = T + \delta Q + \delta^2 Q + \dots = T + \frac{\delta Q}{1-\delta}
$$
Here, $R$, $T$, and $Q$ represent the payoffs for mutual cooperation, defection, and mutual defection, respectively, while $\delta$ represents the discount rate that measures how players value future profits relative to immediate gains. $V(D)$ is a combination of the immediate gain from defection and the discounted future payoffs under mutual defection.

\subsection{Condition for Cooperation}

For cooperation to be a rational strategy for each player, the payoff from continuous cooperation must be at least as high as the payoff from defection followed by mutual defection $V(C) \ge V(D)$. Theorem~\ref{theo:coop} provides a summary of the result.

\begin{theorem}[Cooperation Condition]
Two players prefer long-term cooperation over engaging in ranking manipulation attacks if and only if:
\[ \delta \geq \delta^* = \frac{T - R}{T - Q} = \frac{p - 2c}{p - \beta p^2 + p^2} \]
where $\delta^*$ is the critical discount factor.
\label{theo:coop}
\end{theorem}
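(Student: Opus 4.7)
The plan is to exploit the incentive-compatibility inequality $V(C) \ge V(D)$ that the authors have already flagged as the correct condition for cooperation. The entire argument reduces to an algebraic rearrangement followed by a substitution of the payoff formulas $R$, $T$, $Q$ from Table~\ref{table:prisoners_dilemma}; there is no dynamic-programming or one-shot-deviation principle to verify here, because under the grim trigger strategy the continuation payoff after any deviation is already pinned down as permanent mutual defection, so a one-shot deviation fully captures all profitable deviations.

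First I would set up the inequality
\[
\frac{R}{1-\delta} \;\ge\; T + \frac{\delta Q}{1-\delta},
\]
multiply both sides by $(1-\delta) > 0$ to clear denominators, and obtain $R \ge (1-\delta)T + \delta Q$. Collecting the $\delta$ terms gives $\delta(T-Q) \ge T - R$. At this point I would invoke the Prisoner's Dilemma ordering $T > R > Q > S$ assumed in the model setup, which guarantees $T - Q > 0$, so dividing preserves the direction of the inequality and yields $\delta \ge (T-R)/(T-Q)$. The biconditional is immediate because each rearrangement step is reversible.

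Next I would substitute the closed forms $R = 1/2$, $T = p + (1-p)/2 - c$, and $Q = \beta p^2/2 + p(1-p) + (1-p)^2/2 - c$. The numerator simplifies cleanly to $T - R = p/2 - c$. For the denominator, the cost $c$ cancels and the constant $1/2$ from $Q$ cancels against $1/2$ in $T$, leaving $T - Q = p/2 + p^2(1-\beta)/2$. Multiplying numerator and denominator by $2$ produces the stated closed form $\delta^* = (p - 2c)/(p - \beta p^2 + p^2)$.

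The only real obstacle is bookkeeping in the expansion of $Q$, since the three cases in the mutual-attack payoff must be expanded carefully so that the $p$ and $p^2$ terms collect correctly; in particular, verifying that the linear-in-$p$ terms from $p(1-p)$ and $(1-p)^2/2$ cancel against the $p$-term of $T$ is the place where an arithmetic slip would most plausibly occur. I would also briefly remark that the standing Prisoner's Dilemma condition $c < p/2 + (\beta-1)p^2/2$ from the model section ensures $T - R > 0$, so the critical discount factor $\delta^*$ lies in $(0,1)$ and the theorem is nonvacuous.
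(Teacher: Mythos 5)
Your proposal is correct and follows essentially the same route as the paper: set up $V(C)\ge V(D)$, rearrange to $\delta \ge (T-R)/(T-Q)$ (your explicit note that $T-Q>0$ justifies the division is a small rigor bonus the paper leaves implicit), and substitute the payoffs to get $T-R = p/2 - c$ and $T-Q = p/2 + p^2(1-\beta)/2$, yielding $\delta^* = (p-2c)/(p-\beta p^2+p^2)$. The algebra and the closed form match the paper's proof exactly.
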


The discount factor reflects how much players value future profits relative to immediate gains.  Higher values of $\delta$ imply that players are more forward-looking and are thus more willing to cooperate because attacking now would mean losing significant future profits. Conversely, a lower $\delta$ makes attacking more appealing, as players prioritize immediate rewards gained from ranking attacks over long-term benefits. Cooperation is only viable if $\delta \geq \delta^*$, emphasizing the need for creating environments where long-term outcomes are valued.

\begin{proof}
For cooperation to be sustainable, we need $V(C) \geq V(D)$, which is
$$\frac{R}{1-\delta} \geq T + \frac{\delta Q}{1-\delta}
$$

Reorganizing this inequality gives the critical condition for sustaining cooperation:
$$\delta \geq \frac{T - R}{T - Q} \overset{\underset{\mathrm{def}}{}}{=} \delta^*$$
where $\delta^*$ is the critical discount factor. This threshold represents the minimum value of $\delta$ required for players to prioritize long-term cooperation over the short-term gains from defection.

Let's expand this condition using our given payoff functions:
\begin{align*}
T - R &= \left[p + (1-p)\left(\frac{1}{2}\right) - c\right] - \frac{1}{2} = \frac{p}{2} - c \\
T - Q &= \left(\frac{1 + p}{2} - c\right) - \left(p^2\frac{1}{2}\beta + p(1-p) + (1-p)^2\frac{1}{2} - c\right) \\
    &= \frac{1 + p}{2} - p^2\frac{1}{2}\beta - p(1-p) - \frac{(1-p)^2}{2} \\
    &= \frac{1 + p}{2} - p^2\frac{1}{2}\beta - p + p^2 - \frac{1}{2} + p - \frac{p^2}{2} \\
    &= \frac{p}{2} - p^2\frac{1}{2}\beta + \frac{p^2}{2}
\end{align*}

Therefore, the players prefer cooperation over launching an attack when:
$$\delta \ge \delta^* = \frac{\frac{p}{2} - c}{\frac{p}{2} - p^2\frac{1}{2}\beta + \frac{p^2}{2}} = \frac{p - 2c}{p - \beta p^2 + p^2}
$$
\end{proof}

\begin{corollary}
    The cooperation will be sustained if and only if the cost is larger than a threshold:
    $$c \ge \frac{p - \delta(p - \beta p^2 + p^2)}{2}$$
    \label{coro:coop}
\end{corollary}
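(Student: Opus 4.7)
The plan is to derive the corollary as a direct algebraic rearrangement of the cooperation condition established in Theorem~\ref{theo:coop}. Concretely, I would start from the inequality
\[
\delta \;\geq\; \frac{p - 2c}{p - \beta p^2 + p^2},
\]
and isolate $c$ on one side. The goal is to present the condition from the designer's perspective: rather than asking how patient players must be for a fixed attack technology, we ask how expensive an attack must be made (given the technology and discounting) for cooperation to hold.

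The first step is to verify that the denominator $p - \beta p^2 + p^2$ is strictly positive, so that multiplying both sides by it preserves the direction of the inequality. I would factor it as $p\bigl(1 + (1-\beta)p\bigr)$ and note that, since $\beta < 1$ by assumption in the model and $p \in (0,1]$ is a probability, both factors are positive. With the sign justified, multiplying through yields
\[
\delta\,(p - \beta p^2 + p^2) \;\geq\; p - 2c,
\]
and then moving terms gives
\[
2c \;\geq\; p - \delta\,(p - \beta p^2 + p^2),
\]
which divided by $2$ is exactly the stated threshold.

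The only subtlety worth flagging is the "if and only if" direction: because the denominator is strictly positive, each step of the rearrangement is reversible, so the resulting inequality on $c$ is equivalent to the inequality on $\delta$ from Theorem~\ref{theo:coop}, not merely implied by it. I do not anticipate a genuine obstacle here; the proof is essentially a one-line manipulation, and the only thing one must be careful about is not inadvertently flipping the inequality when clearing the denominator. If desired, one could also remark that the threshold can be negative when $\delta$ is large enough (i.e., cooperation is sustained even at zero cost), which is consistent with the intuition that sufficiently forward-looking players do not need cost-based deterrence.
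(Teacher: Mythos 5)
Your proposal is correct and is exactly the intended argument: the paper states Corollary~\ref{coro:coop} as an immediate rearrangement of the cooperation condition in Theorem~\ref{theo:coop}, which is precisely what you do. Your added care in checking that the denominator $p - \beta p^2 + p^2 = p\bigl(1 + (1-\beta)p\bigr)$ is positive (so the inequality direction is preserved and every step is reversible, giving the ``if and only if'') is a small but welcome improvement over the paper, which leaves this implicit.
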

Corollary~\ref{coro:coop} suggests that a higher attack cost reduces the attractiveness of attacking, as the immediate gain from a successful attack is offset by a substantial cost to launch the attack. 

In Theorem~\ref{theo:delta-star}, we analyze how different parameters affect the range of the cooperation-inducing $\delta$ and discuss their broader implications for cooperative behavior in the system.

\begin{theorem}[Monotonicity of $\delta^*$]
The critical discount factor $\delta^*$ exhibits the following behavior:
\begin{itemize}
    \item $\delta^*$ decreases as the attack cost $c$ increases. This implies that higher attack costs make cooperation easier to maintain, as the relative benefit of defecting diminishes.
    
    \item $\delta^*$ increases with larger $\beta$. When $\beta$ is high, the payoffs from mutual defection are relatively high, making defection more attractive and cooperation more difficult to sustain.
    
    \item $\delta^*$ is non-monotonic with respect to the attack success rate $p$. This means that the relationship between $p$ and $\delta^*$ is not straightforward—an increase in $p$ may either raise or lower the likelihood of sustaining cooperation, depending on the interplay of other parameters like $c$ and $\beta$.
\end{itemize}
\label{theo:delta-star}
\end{theorem}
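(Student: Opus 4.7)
\textbf{Proof plan for Theorem~\ref{theo:delta-star}.} The plan is to start from the closed form derived in Theorem~\ref{theo:coop}, namely
\[
\delta^*(c,\beta,p) \;=\; \frac{p - 2c}{p + (1-\beta)p^2},
\]
and analyze the three monotonicity statements by direct partial differentiation. Throughout, I will assume the feasible region implied by the PD structure (in particular $c < p/2$, which forces the numerator $p-2c>0$, and $\beta<1$, $p\in(0,1]$, which forces the denominator strictly positive). These sign facts are the only global input I need before differentiating.

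\textbf{Steps 1 and 2 (the easy monotonicities).} First I would handle $c$: the denominator is independent of $c$ and the numerator is linear in $c$ with slope $-2$, so $\partial\delta^*/\partial c = -2/\bigl(p+(1-\beta)p^2\bigr)<0$, giving strict decrease in $c$. Next I would handle $\beta$: the numerator is independent of $\beta$ while the denominator decreases linearly in $\beta$ with slope $-p^2$, so by the quotient rule
\[
\frac{\partial \delta^*}{\partial \beta} \;=\; \frac{(p-2c)\,p^2}{\bigl(p+(1-\beta)p^2\bigr)^2} \;>\; 0,
\]
where positivity uses the PD assumption $c<p/2$.

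\textbf{Step 3 (the non-monotonicity in $p$).} This is the main obstacle and needs a sign analysis. Applying the quotient rule with $N(p)=p-2c$ and $D(p)=p+(1-\beta)p^2$, I get
\[
\frac{\partial \delta^*}{\partial p} \;=\; \frac{D(p) - N(p)\bigl(1 + 2(1-\beta)p\bigr)}{D(p)^2}.
\]
Expanding the numerator cleanly reduces it to a quadratic in $p$,
\[
g(p) \;=\; 2c\bigl(1 + 2(1-\beta)p\bigr) \;-\; (1-\beta)\,p^2.
\]
At $p = 2c$ (the lower feasibility boundary), $g(2c) = 2c + 4c^2(1-\beta)(2) - 4c^2(1-\beta) > 0$, so $\delta^*$ is locally increasing in $p$ there. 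For large $p$, the $-(1-\beta)p^2$ term dominates and $g(p)<0$, so $\delta^*$ is eventually decreasing. Solving $g(p)=0$ yields the unique positive root
\[
p^\dagger \;=\; 2c + \sqrt{4c^2 + \tfrac{2c}{1-\beta}},
\]
at which $\delta^*$ attains its maximum.

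\textbf{Step 4 (confirming the sign change is realized).} The last thing I would do is verify that this critical $p^\dagger$ can lie strictly inside the admissible interval $(2c,1]$, so that the non-monotonicity is not vacuous. Algebra on $p^\dagger<1$ reduces to the transparent inequality $2c < (1-\beta)(1-4c)$, which is satisfied whenever $c$ and $\beta$ are not both simultaneously large; for parameter regimes that violate this, $\delta^*$ is monotone in $p$ on the feasible range, and the third bullet's qualitative statement ("may either raise or lower") is exactly what the interplay between $c$ and $\beta$ controls. I would close by noting that the ambiguity in the third bullet is therefore quantified by the sign of $g(p)$, and by remarking that the peak $p^\dagger$ grows with $c$ and with $(1-\beta)^{-1}$, which foreshadows the "intermediate $p$ is worst for cooperation" phenomenon highlighted in the introduction.
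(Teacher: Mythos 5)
Your proposal is correct and takes essentially the same route as the paper: sign analysis of the partial derivatives of $\delta^* = \frac{p-2c}{p+(1-\beta)p^2}$, with $\partial_c\delta^*<0$, $\partial_\beta\delta^*>0$, and an ambiguous sign in $p$. The paper's own proof is only a one-paragraph sketch asserting these signs, whereas your Steps 3--4 carry out the actual computation, reduce $\partial_p\delta^*$ to the sign of $g(p)=2c\bigl(1+2(1-\beta)p\bigr)-(1-\beta)p^2$, locate the interior peak $p^\dagger = 2c+\sqrt{4c^2+\tfrac{2c}{1-\beta}}$, and give the explicit condition $2c<(1-\beta)(1-4c)$ under which the non-monotonicity is realized on $(0,1]$ -- all of which is consistent with, and more precise than, what the paper states.
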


\begin{proof}
We derive these results by analyzing the partial derivatives of $\delta^*$ with respect to the key parameters $c$, $\beta$, and $p$. For $c$, the partial derivative is negative, indicating that an increase in attack costs reduces the threshold for cooperation. For $\beta$, the derivative is positive, signifying that higher mutual defection payoffs make cooperation harder to maintain. The derivative with respect to $p$ is more complex, as it can be positive or negative depending on the values of the other parameters, leading to the observed non-monotonic behavior.
\end{proof}

We find that the cost of attack $c$ plays a critical role in fostering cooperation by lowering the critical discount factor $\delta^*$. An increase in $c$ discourages players from defecting, as the expense of launching an attack outweighs its short-term benefits. This insight highlights the importance of measures that raise the cost of malicious actions, such as enhancing LLM security technologies or implementing stricter legal penalties. By making defection more costly, these interventions shift incentives in favor of long-term cooperative behavior, creating a more stable ecosystem.

The degradation factor $\beta$ also significantly affects the sustainability of cooperation. It reflects how much the market value diminishes when mutual defection occurs, which degrade the performance of LLMs. As $\beta$ increases, the payoffs from mutual defection (when both players attack) become more attractive, which undermines the incentive to cooperate. Lowering $\beta$ effectively widens the range of cooperation inducing $\delta$, making cooperation more likely to be sustained. In practical terms, this suggests that systems should be designed to heavily penalize mutual defection. For instance, designing LLM-driven search engines where mutual defection leads to severe degradation in output quality can reduce the benefits of defection, thereby promoting cooperation. 

The attack success rate $p$ introduces complexity to the dynamics of cooperation due to its non-monotonic relationship with $\delta^*$. It affects both the temptation to defect and the risks associated with defection. When $p$ is low, attacks are less likely to succeed, which may disincentivize players from defecting. However, as $p$ increases, the temptation to attack rises because a successful attack yields significant short-term gains. Interestingly, beyond a certain threshold, further increases in $p$ can counter-intuitively reduce the temptation to defect, because the mutual defection that follows successful attacks degrades the market significantly and the cost of developing a stronger attack becomes higher, both reducing the attractiveness of defection. This nuanced interplay indicates that managing $p$ requires careful calibration, as its effects on cooperation depend on its interactions with other parameters such as $c$ and $\beta$. We will investigate how $p$ impacts cooperation in depth in the following sections.

These findings underscore the broader implications of parameter manipulation for designing cooperative systems. Raising the cost of attacks, lowering the payoffs from mutual defection, and carefully managing the attack success probability can together create an environment conducive to sustained cooperation. In the following sections, we delve deeper into how these parameters interact and explore practical strategies for implementing these insights in real-world LLM-driven ecosystems.

\subsection{Cooperation Formation Region}
\label{sec:coop-form}

Due to the lack of a closed-form solution for the cooperation formation condition in terms of $p$, we employ numerical visualization to investigate the parameter space where cooperation can be sustained. 

Figure~\ref{fig:cooperation_region} visualizes the regions of the \(\delta\) (discount factor) and \(p\) (attack success probability) space that results in long-term cooperation under various values of \(\beta\) and different cost functions. The region to the right of the curve (the blue region) is where $\delta > \frac{p - 2c}{p - \beta p^2 + p^2}$, \ie, the cooperation formation region. We can derive several key observations from these plots.

First, across all cost functions, the region where cooperation is possible tends to shrink as \(\beta\) increases. This is evident as the blue regions, where the inequality condition is satisfied, become smaller moving from the first row (where \(\beta = 0.2\)) to the fourth row (where \(\beta = 0.8\)). The decrease in the size of the cooperation region with increasing \(\beta\) can be attributed to the increasing sensitivity of the players to defection payoffs. As \(\beta\) increases, the loss of degradation decreases, therefore the incentive for defection grows, thereby requiring higher values of \(\delta\) to sustain cooperation.
\begin{figure}[t]
\centering
\includegraphics[width=\linewidth]{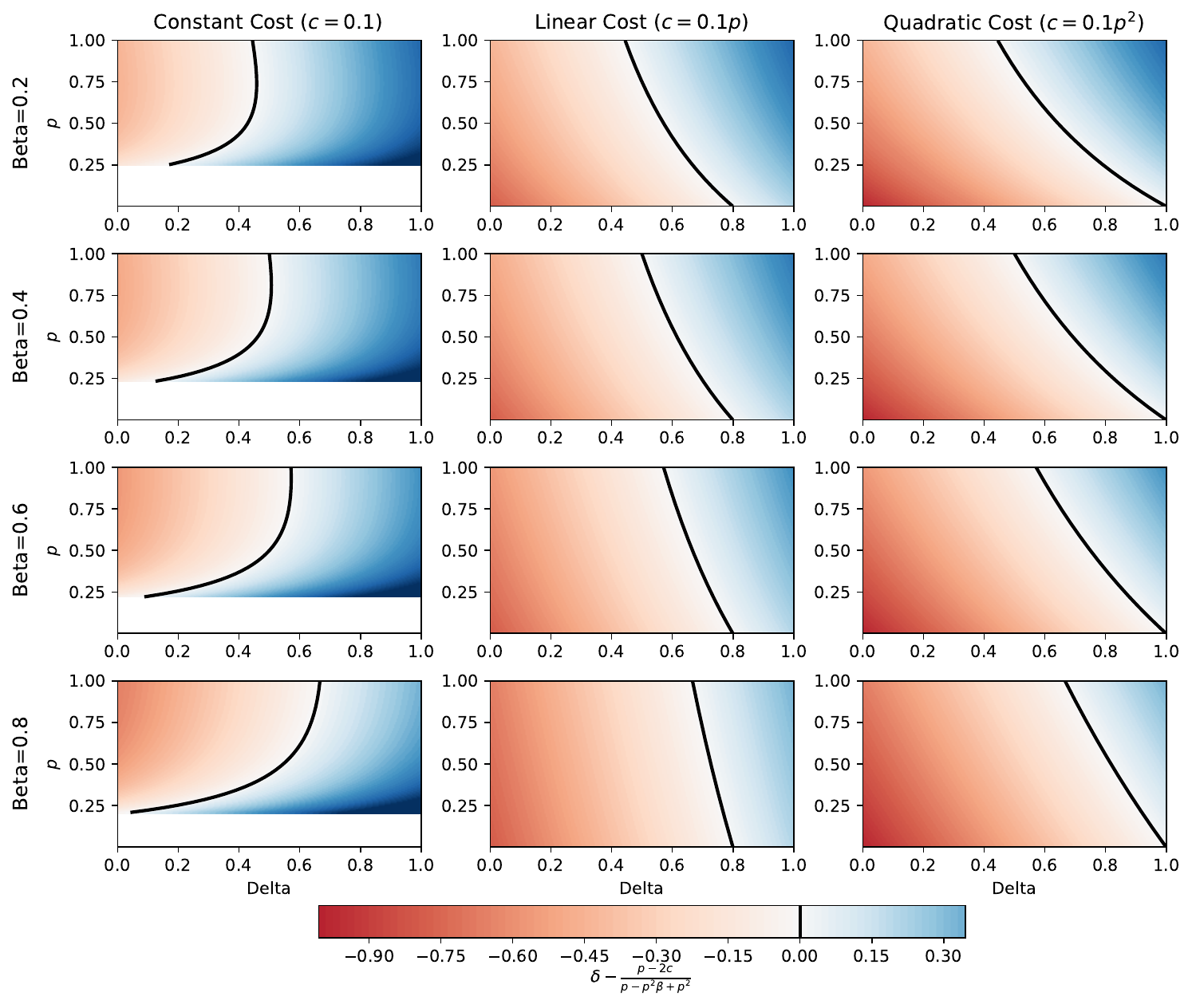}
\vspace{-0.8cm}
\caption{Region of Corporation Formation (the region to the right of the boundary) 
}
\label{fig:cooperation_region}
\end{figure}
Second, the size and shape of the cooperation region vary significantly across different cost functions. For the constant cost function ($c=0.1$), the cooperation region is relatively larger across different values of $\delta$. In contrast, with a linear cost function ($c=0.1p$), the cooperation region becomes smaller, as the cost is overall smaller than the linear setting. Finally, for the quadratic cost function ($c=0.1p^2$), the cooperation region is the smallest, as the cost values are even smaller compared to the linear and constant cases. This minimal cost significantly diminishes the likelihood of maintaining cooperation. While the magnitude of the cost primarily determines the overall size of the cooperation region, the form of the cost function governs the shape of the cooperation boundary. The curvature and slope of the boundary reflect the sensitivity of cooperation to changes in $\delta$, $\beta$ and $p$.

Third, there exists a lower bound for \(\delta\), below which cooperation is not feasible. This lower bound represents the minimum level of patience (or preference for future rewards) required for players to consider cooperating. Interestingly, this lower bound for \(\delta\) may increase or decrease as \(p\) increases, depending on the specific cost function \(c\) and value of \(\beta\). In some scenarios, for example, the subfigures in the second and third columns, a higher success probability \(p\) makes it easier to maintain cooperation, thus lowering the minimum required \(\delta\). In other cases, like the first column subfigures, increasing \(p\) might raise the lower bound for \(\delta\), making cooperation harder to achieve unless players are sufficiently future-oriented. In addition, for some settings where $\beta$ is small, such as the top left subfigure, although increasing \(p\) disincentive cooperation at the beginning, after a certain point, $p$ further can unexpectedly decrease the motivation to defect, as the increasing success of attacks leads to a larger loss of mutual defection, which substantially harms the market and diminishes the appeal of defection.

In summary, the ability to sustain long-term cooperation depends critically on the interplay between the discount factor \(\delta\), success probability \(p\), and the form of the cost function \(c\). Cooperation is more likely when the discount factor is high, and the cost is high. As the cost decreases, the conditions for cooperation become more stringent, shrinking the feasible region for long-term cooperation. These findings suggest that both the design of cost structures and the understanding of players' time discount sensitivities are crucial for fostering stable cooperative relationships.

\subsection{Payoff Analysis of Cooperation and Defection in LLM Systems}
\label{sec:payoff-analysis}


In this section, we examine the payoff values for cooperation $V_C$ and defection $V_D$, focusing on how varying factors such as attack success probabilities $p$ 
affect strategic decisions. Figure~\ref{fig:payoff_beta} illustrates the payoffs for cooperation $V_C$ and defection $V_D$ as $p$ varies. In Figure~\ref{fig:payoff_beta}, we fix $\beta$ at $0.4$ because the pattern of the cooperation region is relatively consistent across $\beta$ values, as we see in Figure~\ref{fig:cooperation_region}. Cooperation is sustainable when the $V_C$ curve lies above the $V_D$ curve, indicating that the long-term benefits of maintaining cooperative behavior outweigh the immediate gains from defection.

\begin{figure}[t]
\includegraphics[width=\linewidth,
  trim=2mm 2mm 2mm 0mm,
  clip]{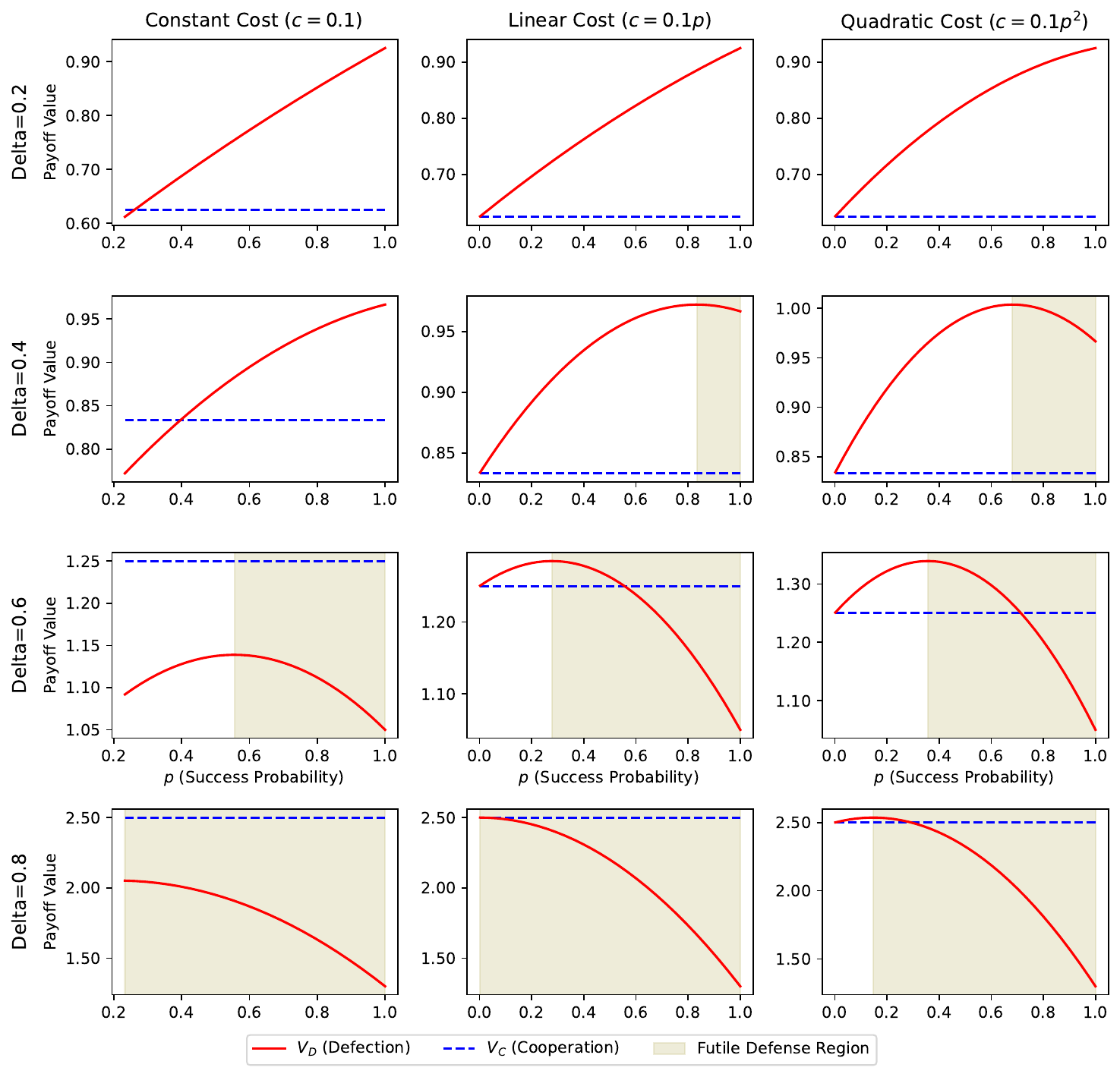}
\caption{$V_C$ and $V_D$ values ($\beta = 0.4$)}
\label{fig:payoff_beta}
\end{figure}

Our analysis uncovers several non-intuitive findings that have significant implications for the security design of LLM-based systems. These insights challenge conventional assumptions about how to maintain cooperation and deter adversarial behavior.


\begin{proposition}[Defection payoff and cap-based defenses]
Fix $\delta\in(0,1)$, $\beta\in(0,1)$, and let $c(p)$ be a differentiable attack cost. Under Grim Trigger, the one-shot defection payoff followed by mutual defection is
\[
V_D(p)
=
\frac{1}{2}+\frac{p}{2}-c(p)
+
\frac{\delta}{1-\delta}
\left(
\frac{1}{2}
-\frac{(1-\beta)p^2}{2}
-c(p)
\right).
\]
The payoff gap between defection and cooperation is
\[
V_D(p)-V_C
=
\frac{(1-\delta)p-\delta(1-\beta)p^2-2c(p)}
{2(1-\delta)}.
\]
If $c''(p)\ge 0$, then $V_D(p)$ is concave in $p$. Any interior maximizer $p^\star$ is the unique solution to
\[
c'(p^\star)+\delta(1-\beta)p^\star
=
\frac{1-\delta}{2}.
\]
If no solution lies in $(0,1)$, the maximum is attained at an endpoint.

For the cost functions used in the figures:
\[
\begin{aligned}
c(p)=c_0
&\quad\Rightarrow\quad
p^\star=\frac{1-\delta}{2\delta(1-\beta)},\\
c(p)=\alpha p
&\quad\Rightarrow\quad
p^\star=\frac{(1-\delta)/2-\alpha}{\delta(1-\beta)},\\
c(p)=\alpha p^2
&\quad\Rightarrow\quad
p^\star=\frac{1-\delta}{2(2\alpha+\delta(1-\beta))},
\end{aligned}
\]
whenever the displayed value belongs to $(0,1)$.
\label{prop:cap}
\end{proposition}

\begin{proof}
Using the payoff definitions,
\[
T=\frac{1}{2}+\frac{p}{2}-c(p)
\]
and
\[
Q
=
p^2\frac{\beta}{2}
+p(1-p)
+(1-p)^2\frac{1}{2}
-c(p).
\]
The mutual-defection payoff simplifies to
\[
Q
=
\frac{1}{2}
-\frac{(1-\beta)p^2}{2}
-c(p).
\]
Therefore,
\[
V_D(p)
=
T+\frac{\delta Q}{1-\delta}
=
\frac{1}{2}+\frac{p}{2}-c(p)
+
\frac{\delta}{1-\delta}
\left(
\frac{1}{2}
-\frac{(1-\beta)p^2}{2}
-c(p)
\right).
\]
Since
\[
V_C=\frac{1}{2(1-\delta)},
\]
we obtain
\[
V_D(p)-V_C
=
\frac{(1-\delta)p-\delta(1-\beta)p^2-2c(p)}
{2(1-\delta)}.
\]

Differentiating $V_D(p)$ gives
\[
V_D'(p)
=
\frac{1}{2}
-
\frac{c'(p)+\delta(1-\beta)p}{1-\delta}.
\]
Thus any interior maximizer satisfies
\[
c'(p^\star)+\delta(1-\beta)p^\star
=
\frac{1-\delta}{2}.
\]
The second derivative is
\[
V_D''(p)
=
-\frac{c''(p)+\delta(1-\beta)}{1-\delta}.
\]
If $c''(p)\ge0$, then $V_D''(p)<0$, so the maximizer is unique whenever it is interior. The closed-form cases follow by substituting $c'(p)=0$, $c'(p)=\alpha$, and $c'(p)=2\alpha p$, respectively.
\end{proof}

\begin{corollary}[Non-monotone effect of reducing $p$]
When an interior maximizer $p^\star$ exists, reducing $p$ can increase the incentive to defect. In particular, for $p>p^\star$, a local decrease in $p$ raises $V_D(p)$ because $V_D'(p)<0$ on that side of the peak.
\end{corollary}

\begin{corollary}[Low-success attacks can remain attractive]
If $c(0)=0$ and $c'(0)<(1-\delta)/2$, then there exists $\varepsilon>0$ such that
\[
V_D(p)>V_C
\qquad
\text{for all } p\in(0,\varepsilon).
\]
Thus, driving $p$ close to zero does not by itself guarantee strict cooperation when low-success attacks have near-zero marginal cost. If $c(0)>0$, this conclusion need not hold.
\end{corollary}

\begin{corollary}[Futile defense region]
Suppose an attacker can choose any $p\in[0,\bar p]$ after a defense imposes an upper bound $\bar p$ on attainable attack success. If $V_D(p^\star)>V_C$ and $\bar p\ge p^\star$, then
\[
\max_{p\in[0,\bar p]} V_D(p)
=
V_D(p^\star)
=
\max_{p\in[0,1]} V_D(p).
\]
Hence, the cap does not reduce the maximum attainable defection payoff. A cap-based defense can change the attacker's best-response payoff only if it lowers the attainable upper bound below $p^\star$, and it restores cooperation only if
\[
\max_{p\in[0,\bar p]}V_D(p)\le V_C.
\]
\end{corollary}

The findings from our analysis emphasize the necessity of a system-level perspective when designing defenses for LLM-based platforms. Instead of solely focusing on reducing the attack success probability $p$, effective strategies must account for the interplay between $p$, the market degradation factor $\beta$, the discount rate $\delta$, and the attack cost $c$. These parameters collectively shape the strategic decisions of both attackers and defenders. For example, dynamic defense mechanisms that adjust the LLM’s response strategies based on detected attack patterns could mitigate persistent threats by introducing variability that increases the cost of launching repeated attacks. Moreover, integrating measures that penalize attackers—whether or not their attacks succeed—can shift the cost-benefit analysis against continued adversarial attempts. This could involve mechanisms that reduce the utility of even low-probability attacks, thereby discouraging persistent efforts to exploit the system. By balancing these considerations, LLM developers can design more resilient systems capable of maintaining stability in the face of evolving adversarial tactics.

\subsection{Empirical Calibration of $p$ and $c$}

The model treats $p$ as a reduced-form probability that an attack changes the LLM-mediated ranking enough to obtain the attacker-favored allocation. Existing ranking-manipulation papers do not always report this exact probability, but their experimental quantities give useful calibration ranges.

\citet{nestaas2024adversarial} report that direct preference-manipulation attacks can succeed in roughly $95\%$--$100\%$ of trials in some search settings, while external-page attacks have lower success, reaching at most about $25\%$. Their plugin experiments also show selection rates increasing from $0\%$ to above $90\%$ in some cases. \citet{pfrommer2024ranking} report normalized ranking-score improvements from $54.23\%$ to $95.74\%$ across models. \citet{tang2025stealthrankllmrankingmanipulation} report average target ranks between $1.46$ and $2.50$ on $10$-item lists and between $1.87$ and $2.39$ on $8$-item lists. Using the rank-normalized proxy
$p_{\mathrm{rank}}=\frac{K-\bar r}{K-1}$,
where $K$ is the candidate-list size and $\bar r$ is the average target rank, these values correspond roughly to $p_{\mathrm{rank}}\in[0.80,0.95]$.

The cost parameter $c$ is measured in units of one-period normalized market value, so dollar costs must be scaled by the economic value of the relevant query class. \citet{pfrommer2024ranking} report inference costs of approximately $\$15$, $\$50$, and $\$450$ for their experiments across different providers. If the one-period gross profit associated with the relevant query class is $G$, these budgets map to
$c\approx \frac{15}{G},\quad
c\approx \frac{50}{G},\quad
c\approx \frac{450}{G}$.
For example, if $G\in[10^3,10^5]$, this gives a rough range
$c\in[1.5\times 10^{-4},4.5\times 10^{-1}]$.
We use this range only as a sensitivity guide, since deployment costs, query value, and attacker automation differ across markets.

\section{Tit-for-Tat Trigger Strategy}
\label{sec:tft}

Following the structure of our model above, this section investigates the Tit-for-Tat (TFT) strategy as an alternative trigger strategy to foster cooperation. We analyze three different settings for the Tit-for-Tat (TFT) strategy: (1) a single defection by one player followed by immediate one-time retaliation from the other, (2) alternating cooperation and defection between the players, and (3) consecutive defections for a fixed number of rounds before returning to cooperation. For each setting, we examine the payoff of defection \(V(D)\), and compare it with the payoff of continuous cooperation \(V(C)\), to derive conditions under which cooperation can be sustained.

\subsection{Setting 1: Player 1 Defects in the First Round, Player 2 Retaliates Once}

In this setting, we consider a scenario where Player 1 defects in the first round, breaking cooperation, but then chooses to cooperate in all subsequent rounds. Player 2 retaliates by defecting in the second round as a response to Player 1's initial defection, and then resumes cooperation from the third round onward. This setup models a limited retaliation strategy, where defection is punished but not perpetuated indefinitely, aligning with the principles of the Tit-for-Tat strategy. The action sequences for the two players are as follows: Player 1's actions are \(D \to C \to C \to C \to \dots\); while Player 2's actions are \(C \to D \to C \to C \to \dots\).

\begin{theorem}[Cooperation Condition Under Single Defection and One-Time Retaliation]
Long-term cooperation is sustainable under this strategy if and only if:
\[
\delta \geq \frac{T - R}{R - S} = 1 - \frac{2c}{p}
\]
\label{theo:tft-onetime}
\end{theorem}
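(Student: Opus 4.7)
The plan is to compute the discounted payoff for the deviating player along the prescribed path, set it against the continuous-cooperation payoff $V(C) = R/(1-\delta)$, and solve the resulting inequality for $\delta$. The argument parallels the proof of Theorem~\ref{theo:coop}, but the post-deviation continuation is different: rather than perpetual mutual defection yielding $Q$ forever, here the system returns to mutual cooperation after a single retaliation round, so I need a different per-period payoff schedule.

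First I would tabulate Player~1's per-round payoffs along the deviation path $D \to C \to C \to C \to \dots$ against Player~2's response $C \to D \to C \to C \to \dots$. Round~1 gives Player~1 the temptation payoff $T$ (Player~1 defects, Player~2 cooperates); round~2 gives the sucker payoff $S$ (Player~1 has already returned to cooperation, Player~2 retaliates); every round from $t=3$ onward gives $R$. Summing the discounted stream yields
\[
V(D) \;=\; T + \delta S + \sum_{t=2}^{\infty} \delta^{t} R \;=\; T + \delta S + \frac{\delta^{2} R}{1-\delta}.
\]

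Next I would enforce $V(C) \geq V(D)$ for sustainability. Subtracting $\delta^{2} R/(1-\delta)$ from both sides and using $R/(1-\delta) - \delta^{2}R/(1-\delta) = R(1+\delta)$, the inequality collapses to
\[
R + \delta R \;\geq\; T + \delta S,
\]
which rearranges to $\delta(R-S) \geq T - R$, i.e.\ $\delta \geq (T-R)/(R-S)$. Since each manipulation is reversible (and $R > S$ under the Prisoner's Dilemma ordering, so the direction of the inequality is preserved when dividing), this establishes the ``if and only if'' claim.

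Finally, plugging in the model's payoff formulas, $T - R = p/2 - c$ (already obtained in the proof of Theorem~\ref{theo:coop}) and $R - S = 1/2 - (1-p)/2 = p/2$, gives $\delta \geq (p/2 - c)/(p/2) = 1 - 2c/p$, matching the stated threshold. There is no real obstacle here beyond careful bookkeeping of the geometric sum starting at $t=2$; the only subtle point is recognizing that the one-shot deviation principle applies so that comparing the full deviation path to continuous cooperation suffices, rather than having to check infinitely many alternative deviations.
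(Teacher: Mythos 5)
Your proposal is correct and follows essentially the same route as the paper: the same per-round payoff schedule $T,\,S,\,R,R,\dots$ for the deviator, the same geometric-sum computation of $V(D)$, the same reduction to $(1+\delta)R \geq T + \delta S$, and the same substitution of $T-R = p/2 - c$ and $R - S = p/2$. No gaps.
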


\begin{proof}
    The payoff for continuous cooperation, denoted \(V(C)\), is straightforward and represents the long-term reward from mutual cooperation across all rounds. It can be expressed as:
    \[
    V(C) = \frac{R}{1 - \delta},
    \]
    where \(R\) is the reward for mutual cooperation, and \(\delta\) is the discount factor that accounts for the value players place on future payoffs. In contrast, the payoff for defection by Player 1, \(V(D)\), reflects the one-time benefit of defection in the first round, followed by the sucker payoff \(S\) in the second round when Player 2 retaliates, and the cooperation payoff \(R\) in all subsequent rounds. The defection payoff is therefore given by:
    \[
    V(D) = T + \delta S + \sum_{t=3}^\infty \delta^{t-1}R = T + \delta S + \frac{\delta^2}{1 - \delta}R,
    \]
    where \(T\) is the temptation payoff received in the first round, \(S\) is the sucker payoff in the second round, and the remaining terms account for the discounted future cooperation payoffs.

    For cooperation to be sustainable, \(V(C) \geq V(D)\):
    \[
    \frac{R}{1 - \delta} \geq T + \delta S + \frac{\delta^2}{1 - \delta}R.
    \]

    Subtract \(\frac{\delta^2}{1 - \delta} R\) from both sides:
    \[
    \frac{R}{1 - \delta} - \frac{\delta^2}{1 - \delta} R \geq T + \delta S.
    \]
    
    Thus, the inequality becomes:
    \[
    (1 + \delta) R \geq T + \delta S.
    \]



    We can rearrange the inequality:
    \[
    \delta \geq \frac{T - R}{R - S} = 1 - \frac{2c}{p}.
    \]
\end{proof}

\subsection{Setting 2: Alternating Cooperation and Defection}

In this setting, the two players engage in a cyclic pattern of alternating cooperation and defection. Player 1 begins by defecting in the first round, prompting Player 2 to retaliate by defecting in the second round. This behavior continues indefinitely, with Player 1 defecting in all odd-numbered rounds and cooperating in all even-numbered rounds, while Player 2 cooperates in odd-numbered rounds and defects in even-numbered rounds. The alternating pattern models a competitive dynamic where neither player consistently cooperates nor defects.
The action sequences for the two players can be summarized as follows:  
Player 1’s actions are \(D \to C \to D \to C \to \dots\);  
Player 2’s actions are \(C \to D \to C \to D \to \dots\)

\begin{theorem}[Cooperation Condition Under Alternating Cooperation and Defection]
Long-term cooperation is sustainable under this strategy if and only if:
\[
\delta \geq \frac{T - R}{R - S}=1 - \frac{2c}{p}
\]
\end{theorem}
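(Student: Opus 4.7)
The plan is to mirror the template of Theorem~\ref{theo:tft-onetime}, but replace the two-round deviation with the infinite alternating trajectory. The first step is to enumerate Player~1's stage payoffs along $(D,C,D,C,\dots)$ against Player~2's $(C,D,C,D,\dots)$: Player~1 earns $T$ in every odd period (defects while the opponent cooperates) and $S$ in every even period (cooperates while the opponent defects). Grouping the discounted payoffs into two-period blocks and summing the resulting geometric series gives
$$V(D) = \sum_{k=0}^{\infty} \delta^{2k}(T + \delta S) = \frac{T + \delta S}{1 - \delta^2}.$$

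Next, I would invoke the cooperation benchmark $V(C) = R/(1-\delta)$ and impose the sustainability inequality $V(C) \geq V(D)$. Multiplying through by $1-\delta^2 = (1-\delta)(1+\delta)$, which is strictly positive since $\delta \in (0,1)$, clears the denominators and reduces the condition to $(1+\delta)R \geq T + \delta S$. Collecting the $\delta$ terms yields $\delta(R - S) \geq T - R$, and because $R > S$ by the Prisoner's Dilemma ordering, one may divide to obtain the threshold $\delta \geq (T-R)/(R-S)$.

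Finally, I would substitute the payoff definitions from Section~\ref{sec:model} to convert the threshold into the stated closed form. The numerator $T - R = p/2 - c$ was already computed in the proof of Theorem~\ref{theo:coop}, and the denominator is $R - S = 1/2 - (1-p)/2 = p/2$. Taking the ratio gives $(p/2 - c)/(p/2) = 1 - 2c/p$, which is the stated right-hand side.

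The derivation is mostly mechanical; the main point requiring care is that the alternation makes the natural geometric series ratio $\delta^2$ rather than $\delta$, so one must pair the $T$ and $\delta S$ terms into a two-period block before summing. A conceptually interesting observation worth flagging is that the resulting threshold coincides exactly with the one in Setting~1, even though the deviation trajectories differ qualitatively (a single two-round punishment versus an infinite alternation). Intuitively, each $(T,S)$ two-period block carries the same discounted net deviation from the cooperative $(R,R)$ baseline in both settings, so the binding incentive constraint ends up identical.
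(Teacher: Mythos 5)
Your proof is correct and follows essentially the same route as the paper's: you compute $V(D) = (T+\delta S)/(1-\delta^2)$ via the same two-period geometric-series grouping, compare against $V(C)=R/(1-\delta)$, and reduce to $(1+\delta)R \geq T+\delta S$, then substitute $T-R = p/2 - c$ and $R-S = p/2$ to obtain $1 - 2c/p$. The only differences are that you spell out the sign conditions justifying the divisions (which the paper leaves implicit) and add a nice remark on why the threshold coincides with Setting 1.
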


\begin{proof}
    The payoff for continuous cooperation, \(V(C)\), remains unchanged, representing the long-term reward for mutual cooperation:
    \[
    V(C) = \frac{R}{1 - \delta}.
    \]
    For alternating cooperation and defection, Player 1 receives the temptation payoff \(T\) in odd-numbered rounds and the sucker payoff \(S\) in even-numbered rounds. The total payoff is an infinite geometric series:
    \[
    V(D) = T + \delta S + \delta^2 T + \delta^3 S + \cdots = \sum_{n=0}^\infty \delta^{2n}T + \sum_{n=0}^\infty \delta^{2n+1}S.
    \]
    Simplifying the series yields:
    \[
    V(D) = \frac{T + \delta S}{1 - \delta^2}.
    \]
    
    To sustain cooperation, the payoff from mutual cooperation must exceed the payoff from alternating cooperation and defection:
    \[
    \frac{R}{1 - \delta} \geq \frac{T + \delta S}{1 - \delta^2}.
    \]
    Simplifying this inequality gives the same condition as Setting 1:
    \[
    \delta \geq \frac{T - R}{R - S}=1 - \frac{2c}{p}.
    \]
\end{proof}

\subsection{Setting 3: Consecutive Defections for k Rounds}

In this setting, Player 1 defects for \(k\) consecutive rounds before returning to cooperation. Player 2 retaliates by defecting from the second round through round \(k+1\), matching Player 1’s defections, before resuming cooperation in round \(k+2\). This scenario models a prolonged period of defection followed by reconciliation, testing the players' capacity to return to cooperative behavior after extended conflict.
The action sequences for the two players can be summarized as follows:  Player 1’s action sequence is $\underbrace{D \to D \to \dots \to D }_{k \text{ rounds}}\to C \to C \to \dots$ (defects for \(k\) rounds, then cooperates); Player 2’s action sequence is \(C \to \underbrace{D \to \dots \to D \to D}_{k \text{ rounds}} \to C \to \dots\) (cooperates in the first round, retaliates for \(k\) rounds, then cooperates).

\begin{theorem}
It is rational for Player 1 to either defect for only one round (\(k = 1\)) or defect indefinitely (i.e., \(k \to \infty\)), depending on the value of the discount factor \(\delta\). Specifically, Player 1 will:
\begin{itemize}
    \item Defect for only one round if \(\delta \ge \frac{Q - S}{R - S} = p\beta + (1-p) - \frac{2c}{p}\).
    \item Defect indefinitely if \(\delta < \frac{Q-S}{R-S} = p\beta + (1-p) - \frac{2c}{p}\).
\end{itemize}
\label{theo:defect-one-indef}
\end{theorem}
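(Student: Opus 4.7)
The plan is to write $V(D,k)$ as an explicit closed-form function of the defection horizon $k$, then study the discrete forward difference $V(D,k+1)-V(D,k)$ and show that its sign is independent of $k$. Such monotonicity in $k$ pins the optimum at one of the two boundary choices, $k=1$ or $k\to\infty$, and the threshold separating the two regimes is read off directly from the sign of the bracketed constant.

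First I partition Player 1's payoff stream: round 1 yields $T$ (P1 defects while P2 still cooperates), rounds 2 through $k$ yield $Q$ (both defect) for a total of $k-1$ rounds, round $k+1$ yields $S$ (P1 has returned to $C$ while P2 is still inside its retaliation window), and all rounds from $k+2$ onward yield $R$. Summing the geometric pieces gives
\[
V(D,k) \;=\; T + \frac{\delta(1-\delta^{k-1})}{1-\delta}\,Q + \delta^{k} S + \frac{\delta^{k+1}}{1-\delta}\,R.
\]
As a sanity check, $k=1$ reproduces the formula behind Theorem~\ref{theo:tft-onetime}, and $k\to\infty$ reproduces the grim-trigger defection payoff behind Theorem~\ref{theo:coop}.

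Second, I compute and simplify the forward difference. The $k$-dependent terms telescope, and after combining them the difference factors cleanly as
\[
V(D,k+1) - V(D,k) \;=\; \delta^{k}\bigl[Q - (1-\delta)S - \delta R\bigr].
\]
Since $\delta^{k}>0$, the sign depends only on the bracketed constant, which does not involve $k$. Hence $V(D,k)$ is strictly monotone in $k$ (or constant). Rearranging $Q - (1-\delta)S - \delta R \lessgtr 0$ into $\delta \gtrless (Q-S)/(R-S)$ is valid because $R-S = p/2 > 0$. Therefore, when $\delta \ge (Q-S)/(R-S)$ the sequence is non-increasing, making $k=1$ optimal; when $\delta < (Q-S)/(R-S)$ it is strictly increasing, making $k\to\infty$ optimal. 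Finally, substituting $R=\tfrac{1}{2}$, $S=(1-p)/2$, and the explicit form of $Q$ reduces the threshold to $p\beta + (1-p) - 2c/p$, matching the statement.

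The main obstacle is the bookkeeping in the first step: Player 2's retaliation window has the same length as Player 1's defection block but is shifted by one period, which produces the lone $\delta^{k}S$ term sandwiched between the $Q$-block and the resumed $R$-block. Getting that single term in the right slot is what lets the forward difference factor into $\delta^{k}$ times a $k$-independent quantity; once that algebraic structure is in place, the ``only $k=1$ or $k=\infty$'' dichotomy is immediate and no interior $k$ can ever be chosen.
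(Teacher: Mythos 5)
Your proposal is correct and follows essentially the same route as the paper: the same decomposition of the $k$-round defection payoff into $T$, a geometric block of $Q$, a lone $\delta^k S$ term, and the resumed $R$ stream, followed by the same forward-difference computation $V(D,k+1)-V(D,k)=\delta^k\bigl[Q-(1-\delta)S-\delta R\bigr]$, whose $k$-independent sign yields the boundary dichotomy and the threshold $\delta = (Q-S)/(R-S)$. Your version is marginally more careful in explicitly noting that $R-S=p/2>0$ justifies the rearrangement and that sign-constancy in $k$ is what forces the optimum to a boundary, but the substance is identical.
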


The implications of Theorem~\ref{theo:defect-one-indef} reveal that Player 1's decision to defect either for only one round (\(k = 1\)) or indefinitely (\(k \to \infty\)) depends on the value of the discount factor \(\delta\). When Player 1 defects only once, this behavior is equivalent to the scenario described in Setting 1, where a single defection is followed by a return to cooperation. Specifically, if \(\delta \geq p\beta + (1-p) - \frac{2c}{p}\), Player 1 defects once and then resumes cooperation, as the long-term value of cooperation outweighs the immediate gains of prolonged defection. However, if \(\delta < p\beta + (1-p) - \frac{2c}{p}\), the immediate rewards from defection dominate, prompting Player 1 to defect indefinitely, equivalent to the Grim Trigger strategy.

Combining this theorem with the Theorem~\ref{theo:tft-onetime} in Setting 1 provides a complete characterization of Player 1’s behavior across the entire range of \(\delta\). If \(\delta \geq 1 - \frac{2c}{p}\), cooperation is sustained indefinitely, as both players value the long-term rewards of mutual cooperation over any short-term temptation. If \(\delta\) lies in the intermediate range, \(p\beta + (1-p) - \frac{2c}{p} \leq \delta < 1 - \frac{2c}{p}\), Player 1 defects for one round before returning to cooperation, as this strategy balances the short-term gain from defection with the long-term benefits of cooperation. Finally, if \(\delta < p\beta + (1-p) - \frac{2c}{p}\), Player 1 defects indefinitely, leading to the breakdown of cooperation.

These combined results emphasize the pivotal role of \(\delta\) in governing strategic behavior. A higher \(\delta\) fosters long-term cooperation by ensuring that the future rewards of mutual cooperation outweigh the short-term incentives to defect. Conversely, a lower \(\delta\) diminishes the value of future payoffs, incentivizing short-sighted strategies such as prolonged or indefinite defection. This underscores the importance of designing systems or environments that increase \(\delta\)—for example, by fostering repeated interactions, implementing reputation systems, or introducing long-term incentives—to encourage cooperative outcomes and mitigate the risk of defection.

\begin{proof}
    The payoff for continuous cooperation, \(V(C)\), remains the same:
    \[
    V(C) = \frac{R}{1 - \delta}.
    \]
    For Player 1, the payoff from defecting for \(k\) rounds includes: Temptation payoff (\(T\)) in the first round, Mutual defection payoff (\(Q\)) in rounds 2 through \(k\), Sucker payoff (\(S\)) in round \(k+1\), and Cooperation payoff (\(R\)) from round \(k+2\) onward.
    
    Thus, the total payoff for Player 1 is:
    \[
    V(D) = T + \sum_{i=1}^{k-1} \delta^i Q + \delta^k S + \sum_{t=k+2}^{\infty} \delta^{t-1} R = T + \left(\frac{\delta - \delta^k}{1 - \delta}\right)Q + \delta^k S +  \frac{\delta^{k+1}}{1 - \delta}R .
    \]
    Here, the summation for rounds 2 to \(k\) is a geometric series, while the cooperation payoff beyond round \(k+1\) accounts for the discounted long-term rewards.

    
    We now analyze whether it is beneficial for Player 1 to defect for \(k + 1\) rounds instead of \(k\). The change in Player 1’s payoff when increasing defection rounds from \(k\) to \(k + 1\) is: (1) The payoff from mutual defection \(Q\) increases by \(\delta^k Q\); (2) The sucker payoff \(S\) decreases by \((\delta^k - \delta^{k+1}) S\); (3) The cooperation payoff \(R\) decreases by \(\delta^{k+1} R\).
    
    Thus, the net change in payoff is:
    \[
    \delta^k Q - (\delta^k - \delta^{k+1}) S - \delta^{k+1} R.
    \]
    Factor the terms:
    \[
    \delta^k \left[ Q - S \right] - \delta^{k+1} \left[ R - S \right].
    \]
    Defecting for more rounds is beneficial if the above expression is positive:
    \[
    \delta^k \left[ Q - S \right] - \delta^{k+1} \left[ R - S \right] > 0.
    \]
    Simplifying this gets us the condition when the player has an incentive to defect for more rounds:
    \[
    \delta < \frac{Q-S}{R - S}.
    \]
\end{proof}

\subsection{Cooperation Formation Region under Tit-for-Tat}

\begin{figure}[t]
\centering
\includegraphics[width=\linewidth]{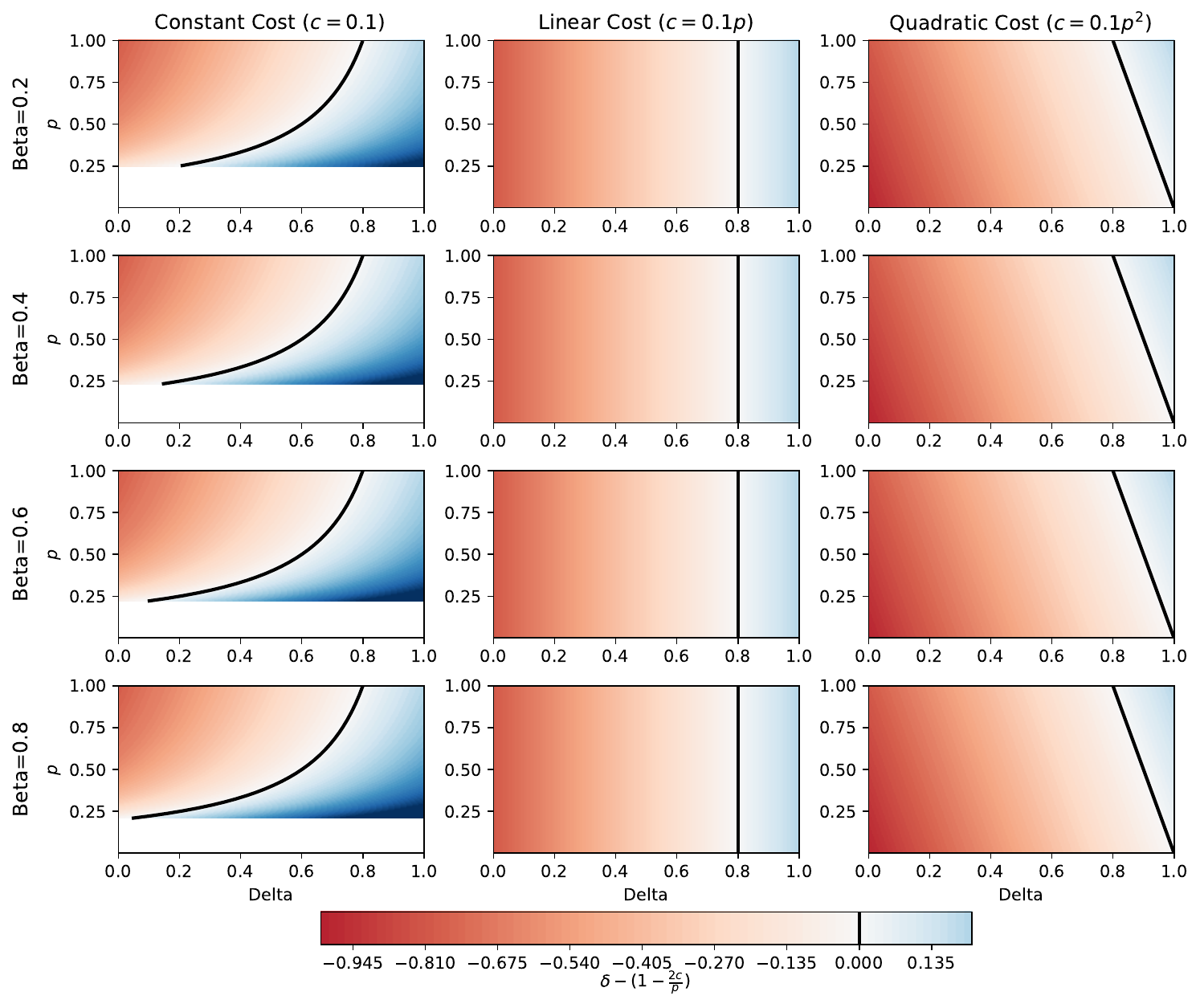}
\vspace{-0.8cm}
\caption{Region of Cooperation Formation (the region to the right of the boundary, Tit-for-Tat)}
\label{fig:cooperation_region_titfortat}
\end{figure}

Under the Tit-for-Tat strategy, the condition for sustaining cooperation is given by $\delta \geq 1 - \frac{2c}{p}$, where \(\delta\) is the discount factor, \(c\) is the cost of launching an attack, and \(p\) is the attack success probability. This condition is notable for being independent of the degradation factor \(\beta\), setting Tit-for-Tat different from the Grim Trigger strategy, where \(\beta\) plays a central role in shaping the cooperation region under the Grim Trigger strategy. This independence fundamentally influences the dynamics of cooperation under Tit-for-Tat, leading to unique properties and implications for the formation and sustainability of cooperative behavior.

Under Grim Trigger, the cooperation condition depends on \(\beta\), as $\delta \geq \frac{p - 2c}{p - \beta p^2 + p^2}$. This condition explicitly incorporates \(\beta\), allowing Grim Trigger to leverage long-term market degradation to enforce cooperation. When \(\beta\) is low (i.e., mutual defection severely degrades market value), Grim Trigger imposes strong penalties for defection, effectively deterring deviations. However, when \(\beta\) is high, mutual defection outcomes become less severe, diminishing the deterrent effect of Grim Trigger's punishment and shrinking the cooperation region.

In contrast, for Tit-for-Tat, the cooperation condition is the same under different \(\beta\) values.
The lack of dependence on \(\beta\) in Tit-for-Tat arises from its short-term punishment mechanism. Unlike Grim Trigger, which enforces cooperation by leveraging long-term degradation in market value (influenced by \(\beta\)), Tit-for-Tat relies solely on immediate retaliation. As our analysis above shows, the Tit-for-Tat strategy only has two possible settings: one player only defects once and another player only retaliates once in the next round, or two players alternate between cooperation and defection. As a result, the two players will never attack simultaneously so that the LLM systems would never degrade, and the cooperation region remains unaffected by \(\beta\), providing stability across varying degradation levels.

Figure~\ref{fig:cooperation_region_titfortat} visualizes the cooperation formation regions for Tit-for-Tat under various cost functions and attack success probabilities. The regions to the right of the curve (the blue regions) represent the parameter space where cooperation can be sustained. Unlike Grim Trigger, these regions are shaped solely by the interplay between \(\delta\), \(c\), and \(p\), with no direct dependence on \(\beta\).

While this independence from \(\beta\) simplifies the analysis, it also highlights a limitation of Tit-for-Tat: its inability to exploit long-term degradation as a deterrent. In scenarios where \(\beta\) is low (i.e., mutual defection severely degrades market value), Grim Trigger effectively uses this dynamic to enforce cooperation by imposing a strong, permanent penalty. Tit-for-Tat, lacking this mechanism, struggles to sustain cooperation when the immediate incentives to defect are strong.




\section{One-Time Fixed Cost}

In this section, we extend the analysis to account for scenarios where players face a one-time fixed cost when launching an attack. This cost could arise from factors such as one-time hardware or software purchases, one-time research and development expenditures, or other non-recurring investments required to enable the attack. We replicate the cooperation condition analysis and the payoff analysis of cooperation and defection, as done in Sections~\ref{sec:analysis}, but now incorporating the one-time fixed cost, denoted as $c$, incurred upon the first attack by a player.

\begin{theorem}[Cooperation Condition with One-Time Fixed Cost]
In the presence of a one-time fixed cost, two players will prefer long-term cooperation over engaging in ranking manipulation attacks if and only if the following condition is satisfied:
\[ \delta \geq \delta^*_{\text{One-Time Cost}} = \frac{p - 2c}{p - p^2 \beta + p^2 - 2c} \]
where $\delta^*_{\text{One-Time Cost}}$ represents the critical discount factor required to sustain cooperation.
\label{theo:coop-onetime}
\end{theorem}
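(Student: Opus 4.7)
The plan is to adapt the grim trigger analysis from Theorem~\ref{theo:coop}, accounting for the fact that the cost $c$ is levied only once per player (at the moment of that player's first attack) rather than every period. The continuation payoffs $V(C)$ and $V(D)$ need to be recomputed with this in mind, and then the inequality $V(C) \geq V(D)$ will be rearranged to isolate $\delta$.

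First I would observe that $V(C) = R/(1-\delta)$ is unaffected, since no player ever attacks along the cooperative path and hence no fixed cost is ever incurred. For $V(D)$, the deviator attacks in period $1$ (paying $c$ for the first and only time) and receives the temptation payoff $T = p + (1-p)\tfrac12 - c$. From period $2$ onward both players attack forever, but the deviator has already paid the fixed cost, so the deviator's per-period payoff in the mutual-attack phase is $Q_0 := Q + c = \tfrac12 p^2\beta + p(1-p) + \tfrac12(1-p)^2$ rather than $Q$. This gives
\[
V(D) = T + \sum_{t=1}^{\infty}\delta^t Q_0 \;=\; T + \frac{\delta\, Q_0}{1-\delta}.
\]

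Next I would impose $V(C) \geq V(D)$ and rearrange in the same way as in the proof of Theorem~\ref{theo:coop}, obtaining the threshold
\[
\delta \;\geq\; \frac{T-R}{T-Q_0}.
\]
Since $T-R = \tfrac{p}{2}-c$ (unchanged from before) and $T - Q_0 = (T - Q) - c = \tfrac{p}{2} - \tfrac{1}{2}p^2\beta + \tfrac{p^2}{2} - c$, multiplying numerator and denominator by $2$ yields the stated critical discount factor $\delta^*_{\text{One-Time Cost}} = \frac{p - 2c}{p - p^2\beta + p^2 - 2c}$.

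The only subtlety, and the step most likely to trip one up, is bookkeeping of when the fixed cost is paid: the temptation payoff $T$ already absorbs the cost $c$ in its definition, while the mutual-attack payoff $Q$ from Table~\ref{table:prisoners_dilemma} also contains a $-c$ term that must be stripped out for all periods $t\geq 2$ along the deviation path, because the deviator does not pay the cost again. Writing $Q_0 = Q + c$ makes this explicit and avoids double-counting; once this substitution is done, the algebra mirrors the grim-trigger derivation exactly and the denominator picks up the extra $-2c$ that distinguishes $\delta^*_{\text{One-Time Cost}}$ from $\delta^*$.
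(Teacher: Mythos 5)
Your proof is correct and follows essentially the same route as the paper: keep $V(C)=R/(1-\delta)$, recompute the deviation payoff with the cost charged only in the first attacking period, and rearrange $V(C)\geq V(D)$ into $\delta \geq (T-R)/(T-Q_0)$ with $Q_0 = Q+c$. Your explicit introduction of $Q_0$ is in fact cleaner bookkeeping than the paper's, which reuses the symbol $Q$ in the geometric sum while silently dropping its $-c$ term in the evaluated expression.
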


\begin{proof}
For cooperation to be sustainable, the condition \( V(C) \geq V(D)_{\text{One-Time Cost}} \) must hold, where \( V(D)_{\text{One-Time Cost}} \) is the defection payoff in the presence of a one-time fixed cost. This is given by:  
\begin{align*}  
V(D)_{\text{One-Time Cost}} &= T + \delta Q + \delta^2 Q + \dots \\  
&= T + \frac{\delta Q}{1-\delta} \\  
&= \left(\frac{1}{2} + \frac{1}{2}p - c\right) + \frac{\delta}{1-\delta}\left(p^2 \beta + p(1 - p) + (1 - p)^2 \frac{1}{2}\right),  
\end{align*}  
where:  
\begin{itemize}  
    \item \( T = \frac{1}{2} + \frac{1}{2}p - c \) represents the immediate payoff when one player defects while the other cooperates, accounting for the one-time fixed cost \( c \).  
    \item \( Q \) represents the discounted future payoffs under mutual defection. For subsequent rounds, there are no additional costs incurred.  
\end{itemize}  

By solving the inequality \( V(C) \geq V(D)_{\text{One-Time Cost}} \), we obtain the critical threshold:  
\[
\delta \geq \delta^*_{\text{One-Time Cost}} = \frac{p - 2c}{p - p^2 \beta + p^2 - 2c}.  
\]
\end{proof}

Compared to the case with recurring costs (Theorem~\ref{theo:coop}), the presence of a one-time fixed cost causes the cooperation region to contract. This is because the threshold for sustaining cooperation raises, as shown in the increasing the left-hand side of the inequality.
The critical condition for cooperation established in Theorem~\ref{theo:coop-onetime} highlights the distinct influence of the fixed cost on equilibrium outcomes. Specifically, the presence of the \(-2c\) term in the denominator reflects the one-time nature of the cost, which does not penalize repeated acts of defection. This limitation makes the fixed cost less effective at discouraging defection compared to recurring costs, which impose a continuous penalty over successive interactions. 

Furthermore, the one-time fixed cost is more effective at discouraging attacks from myopic players with smaller $\delta$, who prioritize immediate payoffs, compared to forward-looking players with a long-term outlook who value future rewards through a larger $\delta$. Myopic players view the fixed cost as a significant hurdle because it substantially reduces their immediate net benefits, making attacks less appealing unless the short-term gains are exceptionally high. In contrast, forward-looking players are less deterred by the fixed cost, as they treat it as an upfront investment and consider the long-term benefits of defection. Consequently, while the fixed cost effectively shrinks the attack incentives for myopic players, its impact on forward-looking players is less pronounced. This distinction underscores the need for strategic calibration of fixed costs to deter attacks from a broader range of players, including those with long-term strategies, ensuring overall system stability and cooperation.

Finally, the findings under recurring costs (Theorem~\ref{theo:coop}) remain consistent when applied to the case of a one-time fixed cost. The cooperation formation region in the presence of one-time fixed cost exhibits similar sensitivity to \(\delta\), \(p\), and \(\beta\), where higher values of \(\beta\) and lower values of costs shrink the region for sustained cooperation. In addition, the non-monotonic relationship between \(p\) and defection payoff persists, where increasing \(p\) can initially discourage cooperation but, beyond a certain threshold, may diminish the motivation for defection as mutual defection harms both players significantly. Furthermore, the existence of futile defense regions is also observed under a one-time fixed cost, where reducing \(p\) does not necessarily lower defection payoffs, leaving attackers' incentives largely unaffected. These findings highlight that while a fixed cost shifts the magnitude of cost impacts, the underlying dynamics of the cooperation formation region and its dependencies on the strategic parameters remain consistent with those derived under recurring costs.

\section{Asymmetric Players Scenarios}

In this section, we extend our analysis to scenarios where players differ in their attack success probabilities, attack costs, and discount rates. Such asymmetries are common in real-world competitive environments where participants have varying resources, capabilities, or strategic priorities. We explore how these differences influence the conditions for sustaining cooperation and identify which player's characteristics are pivotal in determining the overall cooperation dynamics.

The sustainability of cooperation in asymmetric scenarios is primarily determined by the player with the greatest temptation to defect. This player has the most significant influence on whether mutual cooperation can be maintained.

We consider three types of asymmetries: differences in attack success probabilities, differences in attack costs, and differences in discount rates. In the first scenario, the players have different probabilities of successfully executing an attack, with Player 1 having a lower success probability than Player 2 (\(p_1 < p_2\)). In the second scenario, the players face different costs for conducting an attack, with Player 1 incurring a lower cost than Player 2 (\(c_1 < c_2\)). In the third scenario, the players have differing levels of patience, reflected in their discount factors, where Player 1 values future payoffs less than Player 2 (\(\delta_1 < \delta_2\)). For each of these asymmetries, the payoff structures are modified to reflect the differences between the players. We then derive the conditions under which cooperation can be sustained.

\subsection{Scenario 1: Different Attack Success Probabilities ($p_1 < p_2$)}

When players have different probabilities of successfully launching an attack, their incentives to cooperate or defect diverge, as reflected in their respective payoff structures. This asymmetry arises in settings where one player has a greater likelihood of achieving successful attacks, creating unequal temptations to defect.

To account for the asymmetric success probabilities (\(p_1 < p_2\)), the payoff functions are adjusted as follows:

\begin{itemize}
    \item \textbf{Cooperation Payoff:}
    \[
    R_1 = R_2 = \frac{1}{2}.
    \]
    
    \item \textbf{Temptation Payoffs:}
    \begin{align*}
    T_1 &= p_1 + (1 - p_1) \left( \frac{1}{2} \right) - c, \\
    T_2 &= p_2 + (1 - p_2) \left( \frac{1}{2} \right) - c.
    \end{align*}

    \item \textbf{Sucker Payoffs:}
    \begin{align*}
    S_1 &= (1 - p_2) \left( \frac{1}{2} \right), \\
    S_2 &= (1 - p_1) \left( \frac{1}{2} \right).
    \end{align*}
    
    \item \textbf{Mutual Defection Payoff:}
    \begin{align*}
    Q_1 &= p_1 p_2 \frac{1}{2}\beta + p_1 (1 - p_2) + (1 - p_1)(1 - p_2) \left( \frac{1}{2} \right) - c, \\
    Q_2 &= p_1 p_2 \frac{1}{2}\beta + p_2 (1 - p_1) + (1 - p_1)(1 - p_2) \left( \frac{1}{2} \right) - c.
    \end{align*}
    
\end{itemize}

\paragraph{Cooperation Condition}

Cooperation is sustainable if:

\begin{align*}
\delta_i &\geq \frac{T_i - R_i}{T_i - Q_i}, \quad \text{for } i = 1, 2 \tag{Grim Trigger}
\\
\delta_i &\geq \frac{T_i - R_i}{R_i - S_i}, \quad \text{for } i = 1, 2 \tag{Tit-For-Tat}
\end{align*}

\begin{theorem}
\label{thm:asym_p}
Consider two players with attack success probabilities \(p_1\) and \(p_2\), cooperation is sustainable if and only if:
\begin{align*}
\delta_1 \geq \frac{p_1 - 2c}{(1 - \beta)p_1 p_2 + p_2} \quad \text{and} \quad \delta_2 \geq  \frac{p_2 - 2c}{(1 - \beta)p_1 p_2 + p_1} \tag{Grim Trigger}
\end{align*}
\begin{align*}
\delta_1 \geq \frac{p_1 - 2c}{p_2} \quad \text{and} \quad \delta_2 \geq \frac{p_2 - 2c}{p_1} \tag{Tit-For-Tat}
\end{align*}
\end{theorem}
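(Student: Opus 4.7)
The plan is to reduce the asymmetric case to two one-sided conditions, one per player, by invoking the general cooperation thresholds derived earlier and then computing the relevant payoff differences with the modified, player-specific formulas.

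First, I would observe that in an asymmetric infinitely repeated game with perfect monitoring, cooperation is sustainable under the Grim Trigger strategy if and only if each player $i$ individually finds that continuous cooperation dominates a one-shot deviation followed by perpetual mutual defection. Applied to each side separately, this yields $\delta_i \geq (T_i-R_i)/(T_i-Q_i)$, exactly as in the proof of Theorem~\ref{theo:coop} but evaluated with the player-specific payoffs. The analogous TFT condition from Setting 1 of Section~\ref{sec:tft} gives $\delta_i \geq (T_i-R_i)/(R_i-S_i)$ for each player. So the theorem reduces to plugging in the asymmetric $R_i,T_i,S_i,Q_i$ and simplifying each of the four right-hand sides.

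Second, I would compute the numerators, which are common to both strategies: $T_i - R_i = p_i/2 - c$ for $i=1,2$, since $R_i = 1/2$ and $T_i = 1/2 + p_i/2 - c$. For the TFT denominators, $R_i - S_i$ uses the \emph{other} player's success probability (because the sucker payoff depends on the opponent's attack), giving $R_1 - S_1 = p_2/2$ and $R_2 - S_2 = p_1/2$; dividing yields the claimed TFT inequalities $\delta_1 \geq (p_1-2c)/p_2$ and $\delta_2 \geq (p_2-2c)/p_1$.

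Third, for Grim Trigger I would expand $T_i - Q_i$. Writing $T_1 - Q_1 = \tfrac{1}{2} + \tfrac{p_1}{2} - c - \left[\tfrac{1}{2}\beta p_1 p_2 + p_1(1-p_2) + \tfrac{1}{2}(1-p_1)(1-p_2) - c\right]$ and collecting terms, the constant, $c$, and $p_1$ pieces cancel cleanly, leaving $T_1 - Q_1 = \tfrac{1}{2}\bigl(p_2 + (1-\beta)p_1 p_2\bigr)$; the symmetric computation gives $T_2 - Q_2 = \tfrac{1}{2}\bigl(p_1 + (1-\beta)p_1 p_2\bigr)$. Substituting into $(T_i - R_i)/(T_i - Q_i)$ and clearing the factor of $1/2$ produces the stated Grim Trigger thresholds.

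The only real obstacle is the algebraic bookkeeping for $T_i - Q_i$: it is easy to mis-track which factor of $p_1$ vs.\ $p_2$ multiplies the $(1-\beta)$ term, because $Q_1$ and $Q_2$ differ only in the $p_1(1-p_2)$ vs.\ $p_2(1-p_1)$ summand while the $\beta$-term and the symmetric $(1-p_1)(1-p_2)$ term are shared. Once this cancellation is handled carefully, the rest is substitution, and the ``if and only if'' follows because the derivation of the per-player threshold from $V_i(C) \geq V_i(D)$ is an equivalence, not merely an implication.
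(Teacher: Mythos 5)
Your proposal is correct and takes essentially the same route as the paper: apply the symmetric cooperation thresholds $\delta_i \geq (T_i - R_i)/(T_i - Q_i)$ (Grim Trigger) and $\delta_i \geq (T_i - R_i)/(R_i - S_i)$ (Tit-for-Tat) separately to each player with the asymmetric payoffs, then simplify. The paper's own proof is a one-sentence appeal to this principle, so your explicit algebra --- which checks out, including the correct cross-dependence of $R_i - S_i$ on the opponent's success probability and the cancellation yielding $T_1 - Q_1 = \tfrac{1}{2}\bigl(p_2 + (1-\beta)p_1 p_2\bigr)$ --- is if anything more complete than what the paper provides.
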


\begin{proof}
For each player, the discounted payoff from continuous cooperation must be at least as high as the payoff from defecting once and then facing mutual defection indefinitely.
\end{proof}

\begin{corollary}
Under different attack success probabilities ($p1 < p2$), the sustainability of cooperation is primarily determined by the player with the higher success probability. 
\end{corollary}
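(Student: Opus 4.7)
The plan is to observe that Theorem~\ref{thm:asym_p} requires both $\delta_1 \geq \delta_1^*$ and $\delta_2 \geq \delta_2^*$ to hold simultaneously, where $\delta_i^*$ denotes the respective threshold on the right-hand side. Hence the cooperation set is governed by whichever threshold is larger; establishing the corollary reduces to showing that under $p_1 < p_2$, the threshold attached to the high-probability player (Player~2) dominates the one attached to the low-probability player (Player~1), in both the Grim Trigger and Tit-for-Tat expressions.

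For Tit-for-Tat the comparison is essentially immediate: I would simply note that $p_2 > p_1$ makes the numerator $p_2 - 2c > p_1 - 2c$ larger while the denominator $p_1 < p_2$ is smaller, so $\frac{p_2-2c}{p_1} > \frac{p_1-2c}{p_2}$ as long as the numerators are nonnegative. I would cite the Prisoner's Dilemma cost assumption $c < \tfrac{1}{2}p + \tfrac{1}{2}(\beta-1)p^2$ (which in particular gives $2c < p_i$ for each player) to justify the sign control on the numerators.

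For the Grim Trigger case I would form the difference
\[
\delta_2^* - \delta_1^* = \frac{p_2 - 2c}{(1-\beta)p_1 p_2 + p_1} - \frac{p_1 - 2c}{(1-\beta)p_1 p_2 + p_2},
\]
bring it over a common denominator, and expand the numerator. A short calculation should factor the numerator as $(p_2 - p_1)\bigl[(1-\beta)p_1 p_2 + (p_1 + p_2) - 2c\bigr]$. The denominator is positive because $\beta<1$ and $p_i>0$, and the first factor of the numerator is positive by assumption, so the sign of $\delta_2^* - \delta_1^*$ hinges on the bracketed term. I would then argue that $(1-\beta)p_1 p_2 \geq 0$ and $p_1 + p_2 > 2c$ (the latter following from the Prisoner's Dilemma cost restriction applied to either player), concluding that $\delta_2^* > \delta_1^*$.

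The main obstacle I anticipate is not the algebra itself, but making transparent why the bracketed factor is nonnegative without appearing to add a new hypothesis: the key step will be invoking the cost bound inherited from the baseline model so that each $p_i > 2c$, which is strictly stronger than needed and therefore safe. Once that is in hand, the corollary follows because $\delta_2 \geq \delta_2^* \geq \delta_1^*$ implies $\delta_1 \geq \delta_1^*$ is slack, so Player~2's constraint is the binding one in both trigger regimes, establishing that the higher-$p$ player determines whether cooperation can be sustained.
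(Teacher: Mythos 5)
Your proposal is correct and follows the same route as the paper: compare the two critical thresholds in Theorem~\ref{thm:asym_p} and show that the higher-$p$ player's threshold dominates, so their constraint is the binding one. The paper merely asserts the two inequalities, whereas you supply the algebra — the factorization $(p_2-p_1)\bigl[(1-\beta)p_1p_2+(p_1+p_2)-2c\bigr]$ checks out, and your appeal to the Prisoner's Dilemma cost bound (which gives $p_i>2c$) correctly handles the sign of the bracketed term and of the Tit-for-Tat numerators.
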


\begin{proof}
Under the grim trigger strategy, given $p1 < p2$, we have $\frac{p_1 - 2c}{(1 - \beta)p_1 p_2 + p_2} < \frac{p_2 - 2c}{(1 - \beta)p_1 p_2 + p_1}$ for Grim Trigger, and $\frac{p_1 - 2c}{p_2} < \frac{p_2 - 2c}{p_1}$ for Tit-For-Tat. Under both strategies, player 2 has a greater temptation to defect and a stricter cooperation condition.
\end{proof}

Players with higher attack success probabilities face greater temptation to defect, as their potential short-term gains from defection are larger. Thus, the sustainability of cooperation hinges on their willingness to prioritize future payoffs over immediate gains.

\subsection{Scenario 2: Different Attack Costs (\(c_1 < c_2\))}

In scenarios where players incur different costs for launching attacks, the player with the lower cost faces reduced deterrence against defecting. This asymmetry shifts the balance of cooperation dynamics.

The payoff functions reflect the cost asymmetry:

\begin{itemize}
    \item \textbf{Cooperation Payoff:}
    \[
    R_1 = R_2 = \frac{1}{2}.
    \]
    
    \item \textbf{Temptation Payoffs:}
    \begin{align*}
    T_1 &= p + (1 - p) \left( \frac{1}{2} \right) - c_1, \\
    T_2 &= p + (1 - p) \left( \frac{1}{2} \right) - c_2.
    \end{align*}

    \item \textbf{Sucker Payoffs:}
    \begin{align*}
    S_1 &= (1 - p) \left( \frac{1}{2} \right), \\
    S_2 &= (1 - p) \left( \frac{1}{2} \right).
    \end{align*}
    
    \item \textbf{Mutual Defection Payoff:}
    \begin{align*}
    Q_1 = p^2 \frac{1}{2}\beta + p(1 - p) + (1 - p)^2 \left( \frac{1}{2} \right) - c_1, \\
    Q_2 = p^2 \frac{1}{2}\beta + p(1 - p) + (1 - p)^2 \left( \frac{1}{2} \right) - c_2.
    \end{align*}
\end{itemize}

\begin{theorem}
\label{thm:asym_c}
Let two players have different attack costs \(c_1\) and \(c_2\). The cooperation is sustainable if and only if:
\begin{align*}
\delta_1 \geq \frac{p - 2c_1}{(1 - \beta)p^2 + p} \quad \text{and} \quad \delta_2 \geq  \frac{p - 2c_2}{(1 - \beta)p^2 + p} \tag{Grim Trigger}
\end{align*}
\begin{align*}
\delta_1 \geq \frac{p - 2c_1}{p} \quad \text{and} \quad \delta_2 \geq \frac{p - 2c_2}{p} \tag{Tit-For-Tat}
\end{align*}
\end{theorem}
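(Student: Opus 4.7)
The plan is to apply the single-player cooperation conditions derived in Theorem~\ref{theo:coop} (Grim Trigger) and Theorems~\ref{theo:tft-onetime} and~2 (Tit-For-Tat) on a per-player basis. Since the cooperation/defection decision for each player $i$ reduces to comparing the discounted stream $V_i(C)=R_i/(1-\delta_i)$ against the appropriate deviation stream $V_i(D)$, the general thresholds $\delta_i\geq (T_i-R_i)/(T_i-Q_i)$ (Grim Trigger) and $\delta_i\geq (T_i-R_i)/(R_i-S_i)$ (Tit-For-Tat) carry over unchanged; the only work is to substitute the asymmetric payoffs given by the modified $T_i$, $S_i$, $Q_i$ expressions and simplify.

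Next I would compute the three relevant payoff differences for each player $i\in\{1,2\}$. For the numerator, $T_i-R_i=\tfrac{p}{2}-c_i$, which depends on $c_i$. For the Grim Trigger denominator, I would expand
\[
T_i-Q_i = \Bigl(p+(1-p)\tfrac{1}{2}-c_i\Bigr)-\Bigl(\tfrac{1}{2}p^2\beta + p(1-p)+(1-p)^2\tfrac{1}{2}-c_i\Bigr).
\]
The key observation is that the $-c_i$ terms cancel exactly, leaving $T_i-Q_i=\tfrac{p}{2}+\tfrac{(1-\beta)p^2}{2}$, a quantity independent of the cost asymmetry. For the Tit-For-Tat denominator, $R_i-S_i=\tfrac{1}{2}-(1-p)\tfrac{1}{2}=\tfrac{p}{2}$, also cost-free. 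Substituting and clearing the factor of $1/2$ from numerator and denominator yields precisely the four inequalities in the theorem statement.

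Finally, I would note that the two cooperation inequalities decouple: each player's threshold depends only on its own cost and discount factor, with the common parameters $p$ and $\beta$ shared. Mutual cooperation is sustainable if and only if both conditions hold simultaneously. It is worth remarking, in line with the analogous corollary in Scenario~1, that since $c_1<c_2$ implies $p-2c_1>p-2c_2$, the lower-cost player faces the larger threshold, so this player's patience is the binding constraint on cooperation.

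There is no real technical obstacle here: the computation is a direct specialization of the general cooperation conditions. The only subtlety worth flagging is the algebraic cancellation of $c_i$ in $T_i-Q_i$ (and its absence in $R_i-S_i$), which is what makes the asymmetric-cost thresholds take such a clean form and what drives the economic intuition that cost differences translate one-for-one into temptation differences without being dampened by downstream mutual-defection effects.
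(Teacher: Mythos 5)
Your proposal is correct and follows exactly the route the paper intends: apply the per-player thresholds $\delta_i \geq (T_i-R_i)/(T_i-Q_i)$ and $\delta_i \geq (T_i-R_i)/(R_i-S_i)$, substitute the cost-asymmetric payoffs, and simplify — the paper's own proof is only a one-line pointer to Theorem~\ref{thm:asym_p}, so your write-up actually supplies the algebra (the cancellation of $c_i$ in $T_i-Q_i$, the cost-free $R_i-S_i=\tfrac{p}{2}$) that the paper leaves implicit. Your closing observation that the lower-cost player is the binding constraint also matches the paper's subsequent corollary.
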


\begin{proof}
The proof follows similar logic to Theorem~\ref{thm:asym_p}, adjusting for the different costs in the temptation and mutual defection payoffs.
\end{proof}

\begin{corollary}
Under different attack costs ($c_1 < c_2$), cooperation sustainability is determined by the player who has a lower attack cost.
\end{corollary}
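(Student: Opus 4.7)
The plan is to leverage the explicit threshold conditions already stated in Theorem~\ref{thm:asym_c} and reduce the corollary to a direct comparison between the two players' required discount factors. Under both the Grim Trigger and Tit-For-Tat strategies, the denominator of the threshold for player $i$ does not depend on the cost $c_i$ at all (it is $(1-\beta)p^2 + p$ in one case and $p$ in the other), while the numerator $p - 2c_i$ is strictly decreasing in $c_i$. Thus the entire dependence of the threshold on the cost asymmetry is captured by the numerator, and a direct monotonicity argument will do the job.

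Concretely, first I would fix the common parameters $p$, $\beta$, and write the two thresholds $\delta_i^\star$ explicitly. Since $c_1 < c_2$ implies $p - 2c_1 > p - 2c_2$, and both denominators are positive (recall that in a Prisoner's Dilemma we have $T_i > R_i > Q_i > S_i$, which ensures the denominators are positive), I obtain
\[
\frac{p - 2c_1}{(1-\beta)p^2 + p} \;>\; \frac{p - 2c_2}{(1-\beta)p^2 + p}
\quad\text{and}\quad
\frac{p - 2c_1}{p} \;>\; \frac{p - 2c_2}{p}.
\]
This means the player with the lower cost faces a strictly higher critical discount factor under both trigger strategies, i.e.\ $\delta_1^\star > \delta_2^\star$.

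Next, I would interpret this observation in terms of the joint condition required to sustain cooperation. Mutual cooperation requires both incentive constraints to hold simultaneously, so the binding constraint is $\delta_1 \ge \delta_1^\star$, the tighter of the two. Equivalently, if the low-cost player 1 is willing to cooperate, then the high-cost player 2 is automatically willing to cooperate, but not vice versa. Hence the sustainability of cooperation is governed entirely by whether player 1---the one with the lower attack cost---is patient enough, proving the corollary.

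I do not anticipate a serious obstacle: the result is essentially a monotonicity observation on the thresholds given by Theorem~\ref{thm:asym_c}. The only mild care point is verifying that the denominators are positive so that the inequality direction is preserved, but this follows from the standing Prisoner's Dilemma assumption $T > R > Q > S$ (equivalently $c < \tfrac{1}{2}p + \tfrac{1}{2}(\beta - 1)p^2$) that was imposed in Section~\ref{sec:model}.
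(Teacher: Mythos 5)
Your proposal is correct and follows essentially the same route as the paper: both arguments compare the two critical thresholds from Theorem~\ref{thm:asym_c}, observe that the denominators are identical while the numerator $p - 2c_i$ is decreasing in $c_i$, and conclude that the low-cost player faces the stricter (binding) cooperation condition. Your additional remark on checking positivity of the denominators is a harmless refinement (it follows immediately from $\beta<1$ and $p>0$) and does not change the argument.
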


\begin{proof}
Under different attack cost, given $c_1 < c_2$, we have $\frac{p - 2c_1}{(1 - \beta)p^2 + p} > \frac{p - 2c_2}{(1 - \beta)p^2 + p}$ for Grim Trigger, and $\frac{p - 2c_1}{p} > \frac{p - 2c_2}{p}$ for Tit-For-Tat. Under both strategies, player 1 has a greater temptation to defect and a stricter cooperation condition.
\end{proof}

A player with a lower attack cost faces less deterrent against attacking. They set the cooperation threshold, as their defection can disrupt mutual cooperation.

\subsection{Scenario 3: Different Discount Rates (\(\delta_1 < \delta_2\))}

Discount rate asymmetries reflect that players value future profits differently, with the less patient player placing less value on future cooperation benefits. This influences their willingness to cooperate.

\begin{theorem}
\label{thm:asym_delta}
For players with different discount rates \(\delta_1\) and \(\delta_2\), cooperation is sustainable if and only if:
\begin{align*}
c \ge \frac{p - \delta_1(p - \beta p^2 + p^2)}{2} \quad \text{and} \quad c \ge \frac{p - \delta_2(p - \beta p^2 + p^2)}{2}  \tag{Grim Trigger}
\end{align*}
\begin{align*}
c \ge \frac{1}{2}(1-\delta_1)p \quad \text{and} \quad c \ge \frac{1}{2}(1-\delta_2)p  \tag{Tit-For-Tat}
\end{align*}
\end{theorem}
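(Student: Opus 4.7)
The key observation is that in Scenario 3 the primitives $p$, $c$, and $\beta$ are identical across the two players, so the stage-game payoffs $R, T, Q, S$ are exactly those defined in Section~\ref{sec:model}. Only the discount factors $\delta_1, \delta_2$ differ. Since each player's incentive constraint for cooperation depends only on the stage payoffs and on \emph{that player's own} discount factor (the other player's continuation play is what determines $V(D)$, not the other player's time preference), the two-player cooperation condition decouples into a pair of individually rational constraints, one per player. The plan is to apply the single-player conditions already derived in Theorem~\ref{theo:coop} (with its Corollary~\ref{coro:coop}) and in Theorem~\ref{theo:tft-onetime} to each player in turn, then take the conjunction.

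\textbf{Step 1 (Grim Trigger).} I would write down $V_i(C) = R/(1-\delta_i)$ and $V_i(D) = T + \delta_i Q/(1-\delta_i)$ for each player $i \in \{1,2\}$, exactly as in the derivation preceding Theorem~\ref{theo:coop}, but indexed by $i$. The inequality $V_i(C) \ge V_i(D)$ rearranges to $\delta_i \ge (T-R)/(T-Q)$, which by the same algebra shown in the proof of Theorem~\ref{theo:coop} equals $(p-2c)/(p - \beta p^2 + p^2)$. Using Corollary~\ref{coro:coop} to flip this into a condition on $c$ for each player gives the stated pair of inequalities $c \ge \tfrac{1}{2}(p - \delta_i(p - \beta p^2 + p^2))$ for $i=1,2$.

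\textbf{Step 2 (Tit-for-Tat).} For each player $i$, I would replicate the argument of Theorem~\ref{theo:tft-onetime} (Setting 1: single defection, one-time retaliation), again indexed by $\delta_i$. The derivation $(1+\delta_i)R \ge T + \delta_i S$ rearranges to $\delta_i \ge (T-R)/(R-S) = 1 - 2c/p$, which is equivalent to $c \ge \tfrac{1}{2}(1-\delta_i)p$. Taking this for $i=1,2$ yields the Tit-for-Tat half of the statement.

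\textbf{Step 3 (conjunction and binding player).} Both individual conditions must hold simultaneously for mutual cooperation to be an equilibrium, giving the ``and'' form in the theorem. I would also note, mirroring the corollaries after Theorems~\ref{thm:asym_p} and~\ref{thm:asym_c}, that since the right-hand sides are decreasing in $\delta_i$, the less patient player (the one with $\delta_1 < \delta_2$) imposes the binding constraint; this is consistent with the general principle stated at the start of the section that the player with the greatest temptation to defect pins down sustainability.

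\textbf{Main obstacle.} There is essentially no technical obstacle: the proof is a direct application of the already-established single-player thresholds, and the asymmetry in $\delta$ is the easiest of the three scenarios because, unlike Scenarios~1 and~2, the payoff matrix itself is not perturbed. The only care needed is to confirm that no cross-player coupling appears in $V_i(D)$ under grim trigger (it does not, because after a defection both players play $D$ forever and the resulting per-period payoff $Q$ is symmetric and independent of $\delta_j$), and similarly that the Tit-for-Tat deviation payoff for player $i$ depends only on $\delta_i$.
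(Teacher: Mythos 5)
Your proposal is correct and takes essentially the same route as the paper: the paper's own proof is a one-line remark that the argument "follows a similar logic to Theorem~\ref{thm:asym_p}, adjusting for the different discount rates," and your Steps 1--2 simply carry out that adjustment explicitly by applying the per-player incentive constraints $\delta_i \ge (T-R)/(T-Q)$ and $\delta_i \ge (T-R)/(R-S)$ with the symmetric payoffs of Section~\ref{sec:model} and rearranging via Corollary~\ref{coro:coop}. Your added check that $V_i(D)$ involves no cross-player coupling in $\delta_j$ is a useful piece of rigor the paper leaves implicit, but it does not change the approach.
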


\begin{proof}
The proof follows a similar logic to Theorem~\ref{thm:asym_p}, adjusting for the different discount rates.
\end{proof}

\begin{corollary}
The player with a lower discount rate is more inclined to defect, making their cooperation threshold critical for sustaining cooperation.
\end{corollary}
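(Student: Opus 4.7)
The plan is to take the two inequalities from Theorem~\ref{thm:asym_delta} and show that, under each trigger strategy, the bound indexed by $\delta_1$ (the smaller discount factor) dominates the bound indexed by $\delta_2$. Because both inequalities must hold simultaneously for cooperation to be an equilibrium, identifying whichever bound is tighter pins down which player's incentive constraint is binding for the system.

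First I would handle the Grim Trigger case. The right-hand side $\tfrac{p - \delta_i(p - \beta p^2 + p^2)}{2}$ is affine in $\delta_i$ with slope $-\tfrac{p(1 + p(1-\beta))}{2}$. Under the standing assumptions $p \in (0,1)$ and $\beta \in (0,1)$, the factor $1 + p(1-\beta)$ is strictly positive, so the slope is strictly negative and the threshold is decreasing in $\delta_i$. Hence $\delta_1 < \delta_2$ forces
\[
\frac{p - \delta_1(p - \beta p^2 + p^2)}{2} \;>\; \frac{p - \delta_2(p - \beta p^2 + p^2)}{2},
\]
so player $1$'s condition is the binding one. For Tit-For-Tat, the threshold $\tfrac{1}{2}(1-\delta_i)p$ is manifestly decreasing in $\delta_i$, and the same comparison yields that player $1$'s inequality is again tighter.

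Finally, I would translate this algebraic dominance into the stated economic content: the less patient player discounts the stream of cooperative payoffs more heavily, so the one-shot temptation $T$ looms larger relative to the continuation value. Whenever this player prefers to defect, cooperation collapses irrespective of the more patient player's willingness to cooperate, which is precisely the claim that their cooperation threshold is the critical one for the dyad.

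The only subtlety is checking the sign of the Grim Trigger slope, which reduces to verifying $1 + p(1-\beta) > 0$; this is immediate from $\beta < 1$, so no real obstacle arises and the corollary follows as a direct monotonicity argument applied to the two bounds of Theorem~\ref{thm:asym_delta}.
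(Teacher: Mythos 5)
Your proposal is correct and matches the paper's own argument: both compare the two thresholds from Theorem~\ref{thm:asym_delta} and observe that each is decreasing in $\delta_i$, so the less patient player's condition is the binding one. The only difference is that you explicitly verify the sign of the coefficient $p-\beta p^2+p^2 = p(1+p(1-\beta))>0$, which the paper leaves implicit.
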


\begin{proof}
    Under different discount rates, given $\delta_1 < \delta_2$, we have $\frac{p - \delta_1(p - \beta p^2 + p^2)}{2} > \frac{p - \delta_2(p - \beta p^2 + p^2)}{2}$ for Grim Trigger, and $\frac{1}{2}(1-\delta_1)p > \frac{1}{2}(1-\delta_2)p$ for Tit-For-Tat. Under both strategies, player 1 has a greater temptation to defect and a stricter cooperation condition.
\end{proof}

\subsection{Multiple Asymmetric Dimensions}

In this subsection, we explore the scenario where players differ along multiple dimensions simultaneously, such as attack success probabilities (\(p_1, p_2\)) and attack costs (\(c_1, c_2\)). This scenario captures a more realistic situation where participants have asymmetric strengths and weaknesses across multiple strategic factors. For example, players in real-world competitive environments often differ in both their ability to launch successful attacks and the costs they incur for such actions. For instance, a company with better technology might have a higher success probability but also face higher operational costs. These trade-offs affect the cooperation dynamics and require more nuanced strategies to sustain cooperative outcomes.

The payoffs for both players are adjusted to reflect asymmetries in both attack success probabilities and attack costs.

\begin{itemize}
    \item \textbf{Cooperation Payoff:}
    \[
    R_1 = R_2 = \frac{1}{2}.
    \]

    \item \textbf{Temptation Payoffs:}
    \begin{align*}
    T_1 &= p_1 + (1 - p_1) \left( \frac{1}{2} \right) - c_1, \\
    T_2 &= p_2 + (1 - p_2) \left( \frac{1}{2} \right) - c_2.
    \end{align*}

    \item \textbf{Sucker Payoffs:}
    \begin{align*}
    S_1 &= (1 - p_2) \left( \frac{1}{2} \right), \\
    S_2 &= (1 - p_1) \left( \frac{1}{2} \right).
    \end{align*}

    \item \textbf{Mutual Defection Payoff:}
    \begin{align*}
    Q_1 &= p_1 p_2 \frac{1}{2}\beta + p_1 (1 - p_2) + (1 - p_1)(1 - p_2) \left( \frac{1}{2} \right) - c_1, \\
    Q_2 &= p_1 p_2 \frac{1}{2}\beta + p_2 (1 - p_1) + (1 - p_1)(1 - p_2) \left( \frac{1}{2} \right) - c_2.
    \end{align*}
\end{itemize}

\paragraph{Cooperation Conditions with Multiple Asymmetries}

The cooperation conditions must now account for both the differences in attack success probabilities and attack costs. For cooperation to be sustainable, the following inequalities must hold under the Grim Trigger and Tit-for-Tat strategies.

\begin{theorem}
\label{thm:multiple_asym}
Let two players differ in both attack success probabilities and attack costs. The cooperation is sustainable if and only if:
\begin{align*}
\delta_1 \geq \frac{p_1 - 2c_1}{(1 - \beta)p_1 p_2 + p_2} \quad \text{and} \quad \delta_2 \geq \frac{p_2 - 2c_2}{(1 - \beta)p_1 p_2 + p_1} \tag{Grim Trigger}
\end{align*}
\begin{align*}
\delta_1 \geq \frac{p_1 - 2c_1}{p_2} \quad \text{and} \quad \delta_2 \geq \frac{p_2 - 2c_2}{p_1} \tag{Tit-For-Tat}
\end{align*}
\end{theorem}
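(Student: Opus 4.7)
The plan is to mirror the proof strategy used for Theorem~\ref{thm:asym_p} and Theorem~\ref{thm:asym_c}, now handling both dimensions of asymmetry simultaneously. The essential idea is that the condition $V_i(C) \geq V_i(D)$ must hold separately for each player $i \in \{1,2\}$, and since the derivations of Theorem~\ref{theo:coop} and Theorem~\ref{theo:tft-onetime} only use the abstract payoffs $R_i, T_i, S_i, Q_i$, the general forms $\delta_i \geq (T_i - R_i)/(T_i - Q_i)$ (Grim Trigger) and $\delta_i \geq (T_i - R_i)/(R_i - S_i)$ (Tit-For-Tat) apply unchanged. What changes is only the substitution of the asymmetric payoff expressions listed just before the theorem.

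First, I would compute the numerators $T_i - R_i$ for each player. Because $R_i = \tfrac{1}{2}$ is symmetric, this immediately gives $T_1 - R_1 = \tfrac{1}{2} p_1 - c_1$ and $T_2 - R_2 = \tfrac{1}{2} p_2 - c_2$. Next I would compute $R_i - S_i$ for the Tit-For-Tat denominator; using $S_1 = \tfrac{1}{2}(1-p_2)$ and $S_2 = \tfrac{1}{2}(1-p_1)$, these collapse to $R_1 - S_1 = \tfrac{1}{2} p_2$ and $R_2 - S_2 = \tfrac{1}{2} p_1$, so the stated Tit-For-Tat conditions follow directly after clearing the factor of $\tfrac{1}{2}$.

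Third, for the Grim Trigger denominator I would expand $T_i - Q_i$ for each player. The cost terms $c_i$ cancel, leaving only products of $p_1, p_2$ and $\beta$. Grouping terms that contain $p_1 p_2 \beta$, $p_1 p_2$, and the linear terms in $p_1, p_2$ should produce $T_1 - Q_1 = \tfrac{1}{2}\bigl((1-\beta) p_1 p_2 + p_2\bigr)$ and symmetrically $T_2 - Q_2 = \tfrac{1}{2}\bigl((1-\beta) p_1 p_2 + p_1\bigr)$. Dividing $T_i - R_i$ by $T_i - Q_i$ then yields the two stated Grim Trigger inequalities.

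The main obstacle is the bookkeeping in simplifying $T_i - Q_i$, since the asymmetric mutual-defection payoff $Q_i$ mixes both players' success probabilities and contains the cross term $(1-p_1)(1-p_2)$. In particular, the asymmetry between $Q_1$ (which has $p_1(1-p_2)$) and $Q_2$ (which has $p_2(1-p_1)$) must be carefully preserved so that the two players' denominators end up with the correct $p_2$ versus $p_1$ linear term, respectively. Once that expansion is done cleanly, everything else is substitution, and no new strategic reasoning beyond what was used in Theorems~\ref{thm:asym_p} and~\ref{thm:asym_c} is needed.
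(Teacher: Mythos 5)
Your proposal is correct and takes essentially the same route as the paper, which simply asserts that the result "follows from the analysis of each player's incentive to defect" using the per-player conditions $\delta_i \geq (T_i-R_i)/(T_i-Q_i)$ and $\delta_i \geq (T_i-R_i)/(R_i-S_i)$ with the asymmetric payoffs substituted. Your explicit algebra — in particular $T_i - Q_i = \tfrac{1}{2}\bigl((1-\beta)p_1p_2 + p_{-i}\bigr)$ with the costs cancelling — checks out and supplies the computation the paper leaves implicit.
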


\begin{proof}
The proof follows from the analysis of each player's incentive to defect based on their unique attack success probability and attack cost.
\end{proof}

\begin{corollary}
When players differ in both attack costs and success probabilities, the cooperation condition is determined by the player with the greater temptation to defect, which is jointly influenced by their attack cost and success probability.
\end{corollary}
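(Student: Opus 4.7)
The plan is to reduce the joint cooperation requirement to a single binding constraint and then identify that constraint with the notion of ``greater temptation to defect.'' Formally, define the individual critical discount thresholds for the Grim Trigger case as
\[
\delta_1^\ast = \frac{p_1 - 2c_1}{(1-\beta)p_1 p_2 + p_2}, \qquad \delta_2^\ast = \frac{p_2 - 2c_2}{(1-\beta)p_1 p_2 + p_1},
\]
and analogously $\delta_i^\ast = (p_i - 2c_i)/p_{3-i}$ for Tit-for-Tat. By Theorem~\ref{thm:multiple_asym}, cooperation is sustainable if and only if $\delta_i \geq \delta_i^\ast$ for both $i=1,2$ simultaneously. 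Hence, in the common-$\delta$ case, the single binding constraint is $\delta \geq \max\{\delta_1^\ast, \delta_2^\ast\}$, and more generally cooperation collapses as soon as the inequality fails for whichever player has the strictest threshold.

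Next I would interpret $\delta_i^\ast$ as a measure of player $i$'s temptation to defect, by returning to the underlying one-shot gain. Specifically, $\delta_i^\ast = (T_i - R_i)/(T_i - Q_i)$, so $\delta_i^\ast$ is monotone in the ratio of the immediate gain from defection $T_i - R_i$ to the long-run loss from triggering mutual defection $T_i - Q_i$. The immediate gain $T_i - R_i = \tfrac{1}{2}p_i - c_i$ is strictly increasing in $p_i$ and strictly decreasing in $c_i$, which formalizes the claim that attack success probability and attack cost jointly drive the temptation. I would verify by direct computation that $\partial \delta_i^\ast / \partial c_i < 0$ and that $\partial \delta_i^\ast / \partial p_i > 0$ over the relevant parameter region where the Prisoners' Dilemma ordering $T_i > R_i > Q_i > S_i$ holds.

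Finally, I would argue the ``determining player'' claim: the binding inequality is that of the player with the larger $\delta_i^\ast$, and this player is precisely the one whose combination of high $p_i$ and low $c_i$ produces the greater net temptation. Unlike Scenarios~1 and~2, where only one dimension varied and the comparison $\delta_1^\ast$ versus $\delta_2^\ast$ admitted a clean one-parameter ordering, here the comparison reduces to checking the sign of
\[
(p_1 - 2c_1)\bigl[(1-\beta)p_1 p_2 + p_1\bigr] - (p_2 - 2c_2)\bigl[(1-\beta)p_1 p_2 + p_2\bigr],
\]
for Grim Trigger (with an analogous expression for Tit-for-Tat). I would show that this sign encodes the joint comparison: holding one dimension fixed recovers Corollaries under Scenarios~1 and~2 as special cases, while in general whichever sign prevails selects the binding player.

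The main obstacle is that, because two parameters move simultaneously and can pull in opposite directions (e.g., a player with higher $p_i$ but also higher $c_i$), there is no universally dominant player; the claim must be interpreted as identifying the binding threshold ex post from the joint expression rather than from either parameter alone. The core of the argument is therefore conceptual rather than computational: once ``greater temptation to defect'' is identified with the larger $\delta_i^\ast$, the corollary follows immediately from the fact that a conjunction of inequalities is binding at its tightest term.
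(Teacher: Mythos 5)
Your proposal is correct and follows essentially the same route as the paper: the paper's own proof is just a two-sentence remark that when $p_1 < p_2$ but $c_1 < c_2$ the trade-off between success probability and cost determines which player has the stricter condition, and you formalize exactly that observation by identifying the binding constraint with $\max\{\delta_1^\ast,\delta_2^\ast\}$ and noting the sign comparison cannot be resolved by either parameter alone. Your version is more careful than the paper's (in particular the explicit monotonicity checks and the honest caveat that the binding player must be identified ex post), but it is the same argument.
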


\begin{proof}
If \(p_1 < p_2\) but \(c_1 < c_2\), the interaction between success probability and cost will determine which player has a stricter cooperation condition. The trade-off between higher probability and higher cost can either encourage or discourage defection, depending on the specific values.
\end{proof}

The framework we developed allows us to analyze more complex scenarios where players differ along multiple dimensions simultaneously. For instance, if Player 1 has a lower attack success probability (\(p_1 < p_2\)) but incurs a lower attack cost (\(c_1 < c_2\)), the outcome of cooperation will depend on the joint effect of these asymmetries. Specifically, Player 2’s higher success probability provides them with more opportunities to gain from defection, while Player 1’s lower attack cost reduces their barrier to engage in repeated attacks. Together, these factors shape the dynamics of cooperation.

These findings underscore the importance of identifying the player whose characteristics pose the greatest threat to long-term cooperation. Effective cooperative outcomes can be fostered through targeted incentives tailored to the asymmetries. For instance, Player 2 (the more capable but costlier player) could be incentivized to cooperate by sharing a portion of Player 1’s benefits or by imposing penalties that increase Player 2’s effective attack cost. Conversely, Player 1 could be compensated with a share of Player 2’s potential gains to discourage opportunistic behavior stemming from their lower-cost advantage.

Our framework systematically analyzes the interplay between multiple asymmetries and their impact on cooperation, enabling decision-makers to identify the key player whose characteristics pose the greatest threat to sustained cooperation. By modeling differences in attack success probabilities, attack costs, and other dimensions, it provides actionable insights into designing multi-faceted interventions—such as profit-sharing mechanisms or cost-based penalties—to address specific asymmetries. These contributions align with real-world practices, equipping decision-makers with tailored strategies to foster cooperation and mitigate risks associated with strategic imbalances.



\section{Multi-Player Scenarios}

This section derives the conditions under which cooperation can be sustained in scenarios with \(N\) players, of which \(M\) players defect, using different trigger strategies. The analysis extends the payoff structure and the grim trigger and tit-for-tat strategies to multi-player contexts, providing a generalized framework.

In a system with \(N\) players, the payoff dynamics are determined by the total number of defectors \(M\), where \(M < N\). The market is shared among successful attackers, and the outcome varies based on the proportion of successful attacks. 

The payoffs are defined as follows:
\begin{itemize}
    \item \(R\) represents the payoff for mutual cooperation, where all \(N\) players cooperate and share the normalized market value equally, giving \(R = \frac{1}{N}\).
    \item \(T\) is the payoff for the defectors who launch attacks while others cooperate. The expected payoff for a one-time defector includes the probabilities of successful attacks. If exactly \(k\) out of \(M\) defectors succeed (probability \( \binom{M}{k} p^k (1 - p)^{M - k} \)), the \(k\) successful attackers share the market equally. If no attack succeeds (probability \( (1 - p)^M \)), the market is shared equally among all \(N\) players.
  \[
  T = \sum_{k=1}^{M} \binom{M}{k} p^k (1 - p)^{M - k} \cdot \frac{1}{k} + (1 - p)^M \cdot \frac{1}{N} - c
  \]
  \item \(S\) is the payoff for a cooperator when \(M\) players defect. If all \(M\) attackers fail (probability \((1 - p)^M\)), the cooperator receives their share of the market, which is \( \frac{1}{N} \). If at least one attack succeeds (probability \(1 - (1 - p)^M\)), the cooperator receives nothing, as the successful attackers monopolize the market. The resulting payoff is:
  \[
  S = (1 - p)^M \cdot \frac{1}{N}
  \]
  \item \(Q\) captures the payoff for mutual defection, where all \(N\) players defect. If exactly \(k\) attacks succeed (probability \( \binom{N}{k} p^k (1 - p)^{N - k} \)), the \(k\) successful attackers share the market equally. If all \(N\) attacks succeed (probability \(p^N\)), the market degrades to \( \beta \), shared equally among the \(N\) players. If no attack succeeds (probability \( (1 - p)^N \)), the market is shared equally among all \(N\) players.
  \[
  Q = \sum_{k=1}^{N-1} \binom{N}{k} p^k (1 - p)^{N - k} \cdot \frac{1}{k} + p^N \cdot \frac{\beta}{N} + (1 - p)^N \cdot \frac{1}{N} - c
  \]
\end{itemize}

Now let’s plug in our formulas into the critical conditions for the Grim Trigger and Tit-for-Tat strategies, given:

Under Grim Trigger, the cooperation is sustainable if and only if:  
  \[
  \delta \geq \frac{T - R}{T - Q}.
  \]

Under Tit-for-Tat, the cooperation is sustainable if and only if:  
  \[
  \delta \geq \frac{T - R}{R - S}.
  \]

Let’s substitute the derived formulas step-by-step to better understand the conditions for sustaining cooperation.

\begin{theorem}[Condition to Sustain Cooperation in Multi-Player Setting]
In the Multi-Player setting with the Grim Trigger strategy, cooperation is sustainable if and only if:
\begin{align*}
    \delta &\geq \frac{T - R}{T - Q} \overset{\underset{\mathrm{def}}{}}{=} \delta^*_{\text{Multi,GT}} \tag{Grim Trigger} \\
    \delta &\geq \frac{T - R}{R - S}\overset{\underset{\mathrm{def}}{}}{=} \delta^*_{\text{Multi, TFT}} \tag{Tit-For-Tat}
\end{align*}
, where
\begin{align*}
&\delta^*_{\text{Multi,GT}} = 
\mathsmaller{
\frac{
\left( \sum_{k=1}^{M} \binom{M}{k} p^k (1 - p)^{M - k} \cdot \frac{1}{k} + (1 - p)^M \cdot \frac{1}{N} - c \right) - \frac{1}{N}
}{
\left( \sum_{k=1}^{M} \binom{M}{k} p^k (1 - p)^{M - k} \cdot \frac{1}{k} + (1 - p)^M \cdot \frac{1}{N} - c \right) 
- \left( \sum_{k=1}^{N-1} \binom{N}{k} p^k (1 - p)^{N - k} \cdot \frac{1}{k} + p^N \cdot \frac{\beta}{N} + (1 - p)^N \cdot \frac{1}{N} - c \right)
},
} \\
&\delta^*_{\text{Multi, TFT}} = \mathsmaller{\frac{\left( \sum_{k=1}^{M} \binom{M}{k} p^k (1 - p)^{M - k} \cdot \frac{1}{k} + (1 - p)^M \cdot \frac{1}{N} - c \right) - \frac{1}{N}}{\frac{1}{N} - (1 - p)^M \cdot \frac{1}{N}}}
\end{align*}
\end{theorem}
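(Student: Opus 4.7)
The plan is to follow the same template used for the two-player Grim Trigger and Tit-for-Tat theorems (Theorem~\ref{theo:coop} and Theorem~\ref{theo:tft-onetime}), but now applied to the multi-player payoffs $R$, $T$, $S$, $Q$ already defined in this section. First I would write down the discounted payoff for perpetual cooperation, $V(C) = R/(1-\delta)$, since a cooperating player earns $R = 1/N$ each period. This expression is identical in form to the two-player case because the per-period cooperative payoff is deterministic.

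Next, I would construct the defection payoffs for each trigger strategy. Under Grim Trigger, a player who defects in the current round (jointly with $M-1$ other defectors, so that the total defecting set has size $M$) earns the temptation payoff $T$ in that round, and then, because any defection triggers universal, permanent defection, earns the mutual-defection payoff $Q$ in every subsequent period. This gives $V(D)_{\text{GT}} = T + \delta Q/(1-\delta)$. Setting $V(C) \geq V(D)_{\text{GT}}$ and rearranging yields $\delta \geq (T-R)/(T-Q)$, into which I substitute the multi-player formulas for $T$, $R$, and $Q$ stated above to obtain $\delta^*_{\text{Multi,GT}}$. Under Tit-for-Tat (following Setting~1 of Section~\ref{sec:tft}), a one-time defection triggers a one-period retaliation, after which play returns to mutual cooperation, producing $V(D)_{\text{TFT}} = T + \delta S + \delta^2 R/(1-\delta)$. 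Combining this with $V(C) = R/(1-\delta)$ and simplifying (the $\delta^2 R/(1-\delta)$ terms cancel, leaving $(1+\delta)R \geq T + \delta S$) gives the condition $\delta \geq (T-R)/(R-S)$, and substituting the multi-player expressions for $T$, $R$, and $S$ produces $\delta^*_{\text{Multi,TFT}}$.

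The only non-routine part of the argument is justifying the multi-player expression for $T$. When a given player defects alongside $M-1$ other defectors, they face a random outcome: conditional on exactly $k$ of the $M$ defectors succeeding, the market is split equally among the $k$ winners, so the focal defector (who must be among the $k$) earns $1/k$. Conditioning further on the focal defector succeeding would require a careful Bayesian split; however, by symmetry one can instead compute the unconditional expected payoff across all $M$ defectors and divide by $M$, or equivalently note that the event ``the focal attacker succeeds and exactly $k$ attackers succeed in total'' has probability $\binom{M-1}{k-1} p^k (1-p)^{M-k}$, yielding the identity $\sum_{k=1}^{M}\binom{M-1}{k-1}p^k(1-p)^{M-k}\cdot(1/k)$; I would verify that this coincides with the quoted $\sum_{k=1}^{M}\binom{M}{k}p^k(1-p)^{M-k}\cdot(1/k) + (1-p)^M/N - (1-p)^M/N$ expression (the $(1-p)^M/N$ term captures the case in which all $M$ attacks fail and the market is split equally among all $N$ players including the failed attackers). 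The same reasoning, applied with $M$ replaced by $N$ and the degradation factor $\beta$ included when all $N$ attacks succeed, justifies the formula for $Q$, while $S$ follows directly from the observation that a cooperator earns nothing whenever at least one defector succeeds.

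The main obstacle I expect is precisely this bookkeeping of ``what a typical defector earns in expectation'' versus ``what the market awards'': ensuring that the combinatorial weights $\binom{M}{k}$ versus $\binom{M-1}{k-1}$ are used consistently, and that the residual $(1-p)^M/N$ term in $T$ matches the residual $(1-p)^N/N$ term in $Q$ in the correct way. Once these payoff identities are pinned down, the derivation of the two critical thresholds is purely mechanical substitution into the closed-form conditions $\delta \geq (T-R)/(T-Q)$ and $\delta \geq (T-R)/(R-S)$, which were already established for the two-player case and whose derivation does not depend on the particular functional form of the stage payoffs.
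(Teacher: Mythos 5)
Your overall architecture matches the paper's: the paper establishes $\delta \geq (T-R)/(T-Q)$ and $\delta \geq (T-R)/(R-S)$ in the two-player sections, observes that those derivations are agnostic to the functional form of the stage payoffs, and then obtains the multi-player theorem by direct substitution of the stated $R$, $T$, $S$, $Q$. Your $V(C)$, $V(D)_{\text{GT}}$, and $V(D)_{\text{TFT}}$ decompositions are exactly the ones used there, so that part is fine and is essentially the entirety of what the paper does.

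The problem is the step you yourself flag as ``the only non-routine part.'' The identity you propose to verify is false. By the standard absorption identity $\binom{M-1}{k-1}\cdot\frac{1}{k} = \frac{1}{M}\binom{M}{k}$, your correctly-conditioned expression collapses to
\[
\sum_{k=1}^{M}\binom{M-1}{k-1}p^k(1-p)^{M-k}\cdot\frac{1}{k} \;=\; \frac{1-(1-p)^M}{M},
\]
which is strictly smaller than the paper's $\sum_{k=1}^{M}\binom{M}{k}p^k(1-p)^{M-k}\cdot\frac{1}{k}$ for $M\geq 2$ (e.g.\ $M=2$, $p=\tfrac12$ gives $\tfrac38$ versus $\tfrac58$). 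The paper's weight $\binom{M}{k}p^k(1-p)^{M-k}$ computes $\mathbb{E}[1/K]$ over the number of successes $K$, i.e.\ the per-winner share averaged over outcomes, not the focal defector's expected payoff, which requires the probability that the focal defector is among the winners. The same issue afflicts $Q$: setting $N=2$ in the multi-player formula yields a cross term $2p(1-p)$ where the two-player $Q$ of Section~\ref{sec:model} has $p(1-p)$, so the stated formulas do not even reduce to the earlier ones. The paper does not perform the Bayesian bookkeeping you describe; it simply takes the $\binom{M}{k}$-weighted sums as the definitions of $T$ and $Q$ and substitutes them into the two thresholds. So to reproduce the theorem as stated you must treat those expressions as definitions and skip your verification; if you instead carry out the conditioning you outline (which is the right way to compute a focal attacker's expected payoff under the described market-splitting rule), you will arrive at different closed forms for $T$ and $Q$ and hence a different $\delta^*_{\text{Multi,GT}}$ and $\delta^*_{\text{Multi,TFT}}$ than the ones in the theorem.
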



\begin{corollary}  
Given the total number of players \( N \), when there are a substantial number of attackers, an increase in the number of attackers \( M \) widens the range of discount factors \(\delta\) that sustain cooperation.
\end{corollary}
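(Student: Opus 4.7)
The plan is to isolate the dependence of $\delta^*$ on $M$ and reduce the corollary to the monotonicity of the temptation payoff $T(M)$. Observe first that $R = 1/N$ and $Q$ depend only on $N$ (together with the fixed parameters $p$, $\beta$, $c$) and not on $M$. Hence $\delta^*_{\text{Multi,GT}}$ varies with $M$ solely through $T(M)$, while $\delta^*_{\text{Multi,TFT}}$ varies with $M$ only through $T(M)$ and $S(M) = (1-p)^M/N$. A quick computation yields $\partial \delta^*_{\text{Multi,GT}}/\partial T = (R-Q)/(T-Q)^2 > 0$, using $R > Q$ from the Prisoner's Dilemma structure, so showing that $T(M)$ decreases in $M$ suffices for the Grim Trigger conclusion. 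For Tit-for-Tat the reduction is even cleaner: the numerator $T - R$ decreases through $T$, while the denominator $R - S = (1-(1-p)^M)/N$ is strictly increasing in $M$, so once $T(M)$ is known to decrease, the TFT case follows immediately.

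To prove $T(M)$ is eventually decreasing, I would rewrite it probabilistically as
\[
T(M) \;=\; E\!\left[\tfrac{\mathbf{1}(K_M \geq 1)}{K_M}\right] + (1-p)^M\cdot\tfrac{1}{N} \;-\; c,
\]
with $K_M \sim \text{Bin}(M,p)$. The second summand is manifestly strictly decreasing in $M$ for any $p \in (0,1)$. For the first summand I would couple $K_{M+1} = K_M + X$ with $X \sim \text{Bern}(p)$ independent of $K_M$, and condition on $X$ to obtain
\[
E\!\left[\tfrac{\mathbf{1}(K_{M+1}\geq 1)}{K_{M+1}}\right] \;=\; (1-p)\, E\!\left[\tfrac{\mathbf{1}(K_M\geq 1)}{K_M}\right] + p\, E\!\left[\tfrac{1}{K_M+1}\right].
\]
A term-by-term expansion then collapses the sign of $T(M+1) - T(M)$ to that of
\[
\sum_{k=1}^M \frac{P(K_M = k)}{k(k+1)} \;-\; \frac{N-1}{N}\,(1-p)^M,
\]
so the whole question reduces to whether the atom at $K_M = 0$ is small enough relative to the tail contribution.

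The main obstacle is that this inequality genuinely fails for very small $M$, when $(1-p)^M$ is close to $1$ and most realizations of $K_M$ are zero; this is the quantitative content of the ``substantial number of attackers'' qualifier. To pin down the threshold, I would exploit the fact that $(1-p)^M$ decays exponentially in $M$, whereas $\sum_k P(K_M = k)/(k(k+1))$ decays only polynomially (roughly like $1/(Mp)^2$ once $K_M$ concentrates around its mean $Mp$). Turning this asymptotic comparison into an explicit non-asymptotic bound---e.g., via a Chebyshev or Chernoff estimate on the lower tail of $K_M$---yields an explicit $M_0(p,N)$ beyond which $T(M+1) < T(M)$, and hence $\delta^*_{\text{Multi,GT}}$ is strictly decreasing in $M$, widening the range of discount factors that sustain cooperation. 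The TFT conclusion then follows with no extra work from the reduction in the first paragraph.
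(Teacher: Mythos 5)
Your proposal is correct in substance but takes a genuinely different route from the paper. The paper argues directly and asymptotically: it replaces $T$ by the approximation $\frac{1}{Mp}-c$ (using concentration of the binomial around $Mp$ and $(1-p)^M\to 0$), sets $x=\frac{1}{Mp}$, and shows the resulting M\"obius expression $\frac{x-A}{x-B}$ is increasing in $x$ (hence decreasing in $M$) because $A-B=\frac{1}{N}-Q>0$; the Tit-for-Tat case is handled by the same approximation. You instead make an exact reduction first: since $R$ and $Q$ are independent of $M$, the sign of $\frac{\partial \delta^*_{\text{Multi,GT}}}{\partial T}=\frac{R-Q}{(T-Q)^2}>0$ (using the same unproved assumption $R>Q$ that the paper also invokes as ``$Q<\frac{1}{N}$'') reduces everything to the monotonicity of $T(M)$, which you then attack with an exact one-step coupling $K_{M+1}=K_M+\mathrm{Bern}(p)$ rather than an approximation. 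That identity correctly gives $T(M+1)-T(M)=p\left[\frac{N-1}{N}(1-p)^M-\sum_{k=1}^{M}\frac{\P(K_M=k)}{k(k+1)}\right]$; note this is the \emph{negative} of the expression whose sign you quote, a slip in the statement only, since the conclusion you then draw ($T(M+1)<T(M)$ once the atom at zero is exponentially small relative to the polynomially decaying sum) is the correct one. Your approach buys more: it avoids the paper's uncontrolled $E[1/K_M]\approx 1/(Mp)$ heuristic, makes explicit exactly where ``substantial number of attackers'' enters (the inequality genuinely fails for small $M$), and is in principle quantifiable via a tail bound to an explicit threshold $M_0(p,N)$ --- though you leave that last quantitative step as a plan rather than carrying it out, whereas the paper simply does not attempt that level of rigor at all. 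The Tit-for-Tat case in both treatments is easy once the Grim Trigger machinery is in place; your observation that $R-S$ is increasing in $M$ makes that direction essentially free.
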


As the number of attackers grows, the critical discount factor \(\delta^*_{\text{Multi}}\) decreases, making it easier to satisfy the cooperation condition.  The corollary highlights an important relationship between the number of attackers and the feasibility of sustaining cooperation. As the number of attackers \( M \) increases, the opportunity for any individual attacker to gain significant benefits through defection diminishes because the total payoff is spread across more players. In other words, the marginal gain from defection becomes smaller when more players are involved in attacks. Simultaneously, the punishment remains severe, which strengthens the incentive to cooperate. Consequently, even players who place less weight on future rewards (lower \(\delta\)) will still find cooperation preferable, widening the range of discount factors under which cooperation can be sustained.

\begin{proof}  
For large \( M \), the term \( T \) can be approximated by focusing on the dominant portion of its summation. The probability mass of the binomial distribution is centered around \( k \approx M p \), and since \(\frac{1}{k} \approx \frac{1}{M p}\) in that region, the sum \(\sum_{k=1}^{M} \binom{M}{k} p^k(1-p)^{M-k}\frac{1}{k}\) approaches \(\frac{1}{M p}\). Additionally, \((1 - p)^M \frac{1}{N}\) becomes negligible as \( M \) grows. Hence:
\[
T \approx \frac{1}{M p} - c.
\]
Substituting \( T \) into \(\delta^*_{\text{Multi,GT}}(M)\):
\[
\delta^*_{\text{Multi,GT}}(M) \approx \frac{\left(\frac{1}{M p}- c - \frac{1}{N}\right)}{\left(\frac{1}{M p} - c - Q\right)}.
\]
Define:
\[
x = \frac{1}{M p}, \quad A = c + \frac{1}{N}, \quad B = c + Q.
\]
Then:
\[
\delta^*_{\text{Multi,GT}}(M) \approx \frac{x - A}{x - B}.
\]
We also know that \( Q < \frac{1}{N} \), which implies:
\[
B = c + Q < c + \frac{1}{N} = A.
\]
Thus, \( A - B > 0 \).

To determine how \(\delta^*_{\text{Multi,GT}}(M)\) evolves as \( M \) grows, examine how it changes as \( x \) varies. Since \( x = \frac{1}{M p} \) decreases as \( M \) increases, the sign of the derivative with respect to \( x \) reveals the direction of change with respect to \( M \). The derivative with respect to \( x \) is:
\[
\frac{d}{dx}\left(\frac{x - A}{x - B}\right) = \frac{A - B}{(x - B)^2}.
\]

Since \( A - B > 0 \), the fraction \((x - A)/(x - B)\) is strictly increasing in \( x \). As \( M \) grows, \( x \) becomes smaller, so \(\delta^*_{\text{Multi,GT}}(M)\) decreases. In other words, increasing \( M \) reduces \(\delta^*_{\text{Multi,GT}}(M)\).

In conclusion, as \( M \) becomes large, the value of \(\delta^*_{\text{Multi,GT}}(M)\) is guaranteed to decrease. This analysis provides a clear asymptotic trend for \(\delta^*_{\text{Multi,GT}}(M)\), indicating that growth in \( M \) shifts the ratio toward smaller values.


Similarly, for the Tit-for-Tat strategy, we simplify the denominator: 
\[
R - S = \frac{1}{N} - \frac{(1 - p)^M}{N} = \frac{1 - (1 - p)^M}{N}.
\]
For large \( M \), \((1 - p)^M \rightarrow 0\) whenever \( 0 < p < 1 \). Thus:
\[
R - S \approx \frac{1}{N}.
\]
Next, examine the numerator for large \( M \). As before, the dominant term in 
\(\sum_{k=1}^{M} \binom{M}{k} p^k (1 - p)^{M-k}\frac{1}{k}\) is approximately \(\frac{1}{M p}\). Also, \((1 - p)^M \frac{1}{N}\) becomes negligible. Hence:
\[
T - R \approx \frac{1}{M p} - c - \frac{1}{N}.
\]
Putting these approximations together:
\[
\delta^*_{\text{Multi, TFT}} \approx \frac{\frac{1}{M p} - c - \frac{1}{N}}{\frac{1}{N}} = N\left(\frac{1}{M p} - c - \frac{1}{N}\right) =  \frac{N}{M p} - cN - 1.
\]
Since the only term in \(\delta^*_{\text{Multi, TFT}}\) that depends on \( M \) is \(\frac{N}{M p}\), which decreases as \( M \) increases, \(\delta^*_{\text{Multi, TFT}}(M)\) also decreases with increasing \( M \).  
\end{proof}  

\section{Conclusion}

This paper presents a game-theoretic analysis of ranking manipulation attacks in LLM-based search engines, where content providers strategically decide whether to engage in manipulative practices to gain competitive advantages. By modeling these interactions as an Infinitely Repeated Prisoners' Dilemma, we capture the unique characteristics of LLM-based systems, including stochastic attack success rates, attack costs, future discount rates, and market degradation effects. Our analysis reveals several key insights that have direct implications for platform designs, industry practices, and regulatory policies.

First, we find that cooperation sustainability depends critically on the interplay between immediate costs and long-term benefits. Content providers with high attack costs or strong forward-looking tendencies are more likely to maintain cooperative behavior, as the immediate gains from manipulation are outweighed by long-term market benefits. In practice, this suggests that platform operators should implement both immediate and long-term deterrence mechanisms. For example, search engines could impose escalating computational costs for realizing ranking manipulations, while simultaneously developing reputation systems that reward long-term cooperative behavior.

Second, we discover that the relationship between attack success probability and cooperation sustainability is non-monotonic. Intermediate success rates can sometimes incentivize more manipulation attempts than high rates, as they provide an optimal balance between potential gains and the combined loss of costs and degradation risks. This presents a crucial challenge for platform operators, suggesting that partial defenses might paradoxically be futile. Defensive measures aimed at capping attack success rates fail to meaningfully reduce manipulation incentives. This finding fundamentally challenges traditional approaches to platform security. Rather than investing heavily in technical measures within these futile defense regions, platforms should redirect resources toward economic deterrence mechanisms and reputation systems that create long-term incentives for cooperation.

In examining asymmetric player scenarios, we find that system stability is primarily determined by the player with the strongest incentive to defect. This suggests that platforms should focus their monitoring and enforcement efforts on the most potential attackers. Platforms could implement tiered security measures that apply stricter scrutiny to larger content providers or those with a history of sophisticated ranking manipulation practices. Additionally, platforms might consider implementing compensatory mechanisms that help level the playing field between players with different capabilities, thereby reducing the incentive for more capable players to exploit their advantages.

The policy implications can be read directly through the model parameters. Defenses that make attacks harder to engineer, such as adversarial-content detection, randomized ranking audits, or stronger separation between retrieved data and model instructions, primarily reduce $p$. Measures that increase the effort needed to develop, test, and maintain attacks, such as rate limits, adaptive audits, and escalating review for repeated suspicious edits, increase $c$. Enforcement mechanisms that demote or remove pages involved in detected simultaneous manipulation lower the effective mutual-defection payoff $Q$, which in our reduced-form model corresponds to a lower effective $\beta$ from the attackers' perspective. Reputation systems, long-term contracts, and persistent audit records increase the value of future cooperative payoffs relative to short-run manipulation gains, which corresponds to a higher effective $\delta$. The non-monotonicity results show that reducing $p$ alone may not be enough; a defense is more reliable when it jointly lowers attainable $p$, raises $c$, and reduces the continuation payoff from mutual defection.

The broader applicability of our findings extends beyond LLM-based search engines to other LLM-driven platforms. Our framework for analyzing the dynamics of players on LLM-based search platforms could inform the design of recommendation systems, content moderation platforms, and more. For example, social media platforms could apply our insights to design more effective content ranking systems that are resistant to manipulation while maintaining fair competition among content creators. Additionally, the same principles may guide developers in refining automated moderation strategies, helping to address emerging content patterns and preserve user trust in the relevance and integrity of ranked outputs.

Several exciting and inspiring directions emerge for future research based on our findings. First, our model opens up opportunities to analyze two-sided markets, where both content providers and platform operators act strategically. Extending our framework to these scenarios could illuminate the dynamic interplay between manipulation strategies and defensive measures, offering a richer understanding of how both sides evolve over time. Second, our work inspires deeper exploration into user behavior and market feedback, which could uncover how manipulation influences long-term platform viability and user trust. By integrating these dynamics into the model, researchers can provide actionable insights into designing systems that preserve user satisfaction while deterring manipulative practices.
Another promising avenue is examining the role of information asymmetry, as real-world participants often operate with imperfect knowledge of others' capabilities and actions. Investigating how this asymmetry shapes strategic behavior could reveal new approaches for designing platforms that are resilient to uncertainty and exploitation.
Finally, as LLMs become increasingly sophisticated, our research lays the groundwork for understanding how advancements in language model capabilities reshape the balance between manipulation and defense. This direction could inspire the development of adaptive and forward-looking platform strategies that align with the evolving technological landscape.

By connecting game-theoretic insights to practical platform design considerations, our work provides a foundation for developing more secure and sustainable LLM-driven information systems. This work contributes to both the theoretical understanding of LLM market mechanisms and its practical application in real-world security problems of LLM-driven systems, offering actionable recommendations for platform operators, policymakers, and industry stakeholders. Through these contributions, we aim to not only advance the academic understanding of manipulation dynamics in LLM-based systems but also equip practitioners with the tools to design robust and equitable platforms that foster trust, fairness, and long-term stability.

\bibliographystyle{iclr2024_conference}
\bibliography{references}

\end{document}